\newcommand{\keywords}[1]{\par\addvspace\baselineskip
\noindent\keywordname\enspace\ignorespaces#1}
\begin{document}

\mainmatter  

\title{Graph-based Learning\\
 with Unbalanced Clusters}

\titlerunning{Graph-based Learning with Unbalanced Clusters}

%
%
\author{Jing Qian
\and Venkatesh Saligrama \and Manqi Zhao}
\authorrunning{Graph-based Learning with Unbalanced Clusters}

\institute{Boston University, Department of Electrical and Computer Engineering,\\
8 Saint Mary's Street, Boston, MA 02215, USA} 

%
%

\toctitle{Lecture Notes in Computer Science}
\tocauthor{Authors' Instructions}
\maketitle

\begin{abstract}
Graph construction is a crucial step in spectral clustering (SC) and graph-based semi-supervised learning (SSL). Spectral methods applied on standard graphs such as full-RBF, $\epsilon$-graphs and $k$-NN graphs can lead to poor performance in the presence of proximal and unbalanced data. This is because spectral methods based on minimizing RatioCut or normalized cut on these graphs tend to put more importance on balancing cluster sizes over reducing cut values.
We propose a novel graph construction technique and show that the RatioCut solution on this new graph is able to handle proximal and unbalanced data.
Our method is based on {\it adaptively} modulating the neighborhood degrees in a $k$-NN graph, which tends to sparsify neighborhoods in low density regions. Our method adapts to data with varying levels of unbalancedness and can be naturally used for small cluster detection. We justify our ideas through limit cut analysis. Unsupervised and semi-supervised experiments on synthetic and real data sets demonstrate the superiority of our method.

\keywords{Adaptive graph sparsification, small cluster detection}
\end{abstract}

\section{Introduction and Motivation}\label{sec:intro_motiv}
Graph-based approaches are popular tools for unsupervised clustering and semi-supervised learning(SSL). In these approaches, a graph representing the data set is first constructed. Then a graph-based learning algorithm such as spectral clustering(SC) \cite{Shi00} or SSL algorithms \cite{zhu08,WanJebCha08} is applied on the graph. 
Of the two steps, graph construction has been identified to be critical\cite{zhu08,Luxburg07,Maier1,JebShc06,JebWanCha09}. Effective graph construction strategies turn out to be even more critical in the presence of unbalanced and proximal data. Unbalanced data arises routinely in many applications including multi-mode(class) clustering and SSL tasks. The focus of this paper is on graph construction for spectral methods and we refer to \cite{Fraley02} for model-based approaches. 

Common graph construction methods include $\epsilon$-graph, fully-connected RBF-weighted(full-RBF) graph and $k$-nearest neighbor($k$-NN) graph.
$\epsilon$-graph links two nodes $u$ and $v$ if $d(u,v)\leq \epsilon$. 
Full-RBF graph links every pair of nodes with RBF weights $w(u,v)=exp(-d(u,v)^2/2\sigma^2)$, which is in fact a soft threshold($\sigma$ serves similarly as $\epsilon$).
$k$-NN graph links $u$ and $v$ if $v$ is among the $k$ closest neighbors of $u$ or vice versa. It is the most recommended method\cite{Luxburg07,zhu08} due to its relative robustness to outliers.
In \cite{JebShc06} the authors propose $b$-matching graph. This method is supposed to eliminate some of the spurious edges of $k$-NN graph and lead to better performance\cite{JebWanCha09}.

However, for unbalanced and proximal data clusters, SC and graph-based SSL algorithms appear to perform poorly on these conventional graphs.
This poor performance is a result of minimizing RatioCut objective on these graphs. For unbalanced and proximal data clusters the RatioCut objective on these graphs tends to put more importance on balancing cluster sizes over reducing cut values. This sometimes leads to cuts that are not meaningful. In Section~2 we will investigate the fundamental reasons that lead to poor results. We will then outline a novel graph construction strategy, whereby the RatioCut objective on this new graph is able to handle varying levels of proximal and unbalanced data. Our rank-modulated degree (RMD) graph construction method, described in detail in Section~3, is based on modulating the degrees in a $k$-NN graph. The impact of this strategy is that it results asymptotically in more edges per node in high-density regions and a sparsification near density valleys. We explore the theoretical basis for these results in Section~4. In Section~5 we present several experiments on synthetic and real datasets and show significant improvements in SC and SSL results over conventional graph constructions.

%
%
%

\section{Proximal \& Unbalanced Data Clusters}
In this section we will investigate some of the reasons that lead to poor SC and SSL performance for conventional graph constructions in the presence of proximal and unbalanced data. We draw upon existing results to justify our reasoning.

Let $G=(V,E)$ be the graph constructed from $n$ samples drawn IID from some underlying density $f(x)$, where $x \in \mathbb{R}^d$. Let $(C,\bar{C})$ be a 2-partition of the nodes separated by a hyper surface $S$. The simple cut is defined as:
\begin{eqnarray}\label{equ:cut}
Cut(C,\bar{C}) = \sum_{u\in C,v\in \bar{C},(u,v)\in E}w(u,v),
\end{eqnarray}
where $w(u,v)$ is the weight of edge $(u,v) \in E$. Spectral clustering techniques are based on minimizing RatioCut:
\begin{equation}\label{equ:ratiocut}
    RatioCut(C,\bar{C})=Cut(C,\bar{C})\left(\frac{1}{|C|}+\frac{1}{|\bar{C}|}\right),
\end{equation}
where $|C|$ denotes the number of nodes in $C$. A variant of RatioCut is the so called normalized cut (NCut). Our discussions for RatioCut also extend to NCut and we will not discuss NCut from here on. Note RatioCut augments the simple Cut with a balancing term, which desensitizes partitions from outliers. \\

\noindent
{\bf Unbalanced Proximal Gaussian Mixture:}
By means of an example, we will argue that minimizing RatioCut on conventional graphs has fundamental drawbacks for clustering proximal and unbalanced datasets.
\begin{figure*}[!tb]
\begin{centering}
\begin{minipage}[t]{.45\textwidth}
\includegraphics[width = 1\textwidth]{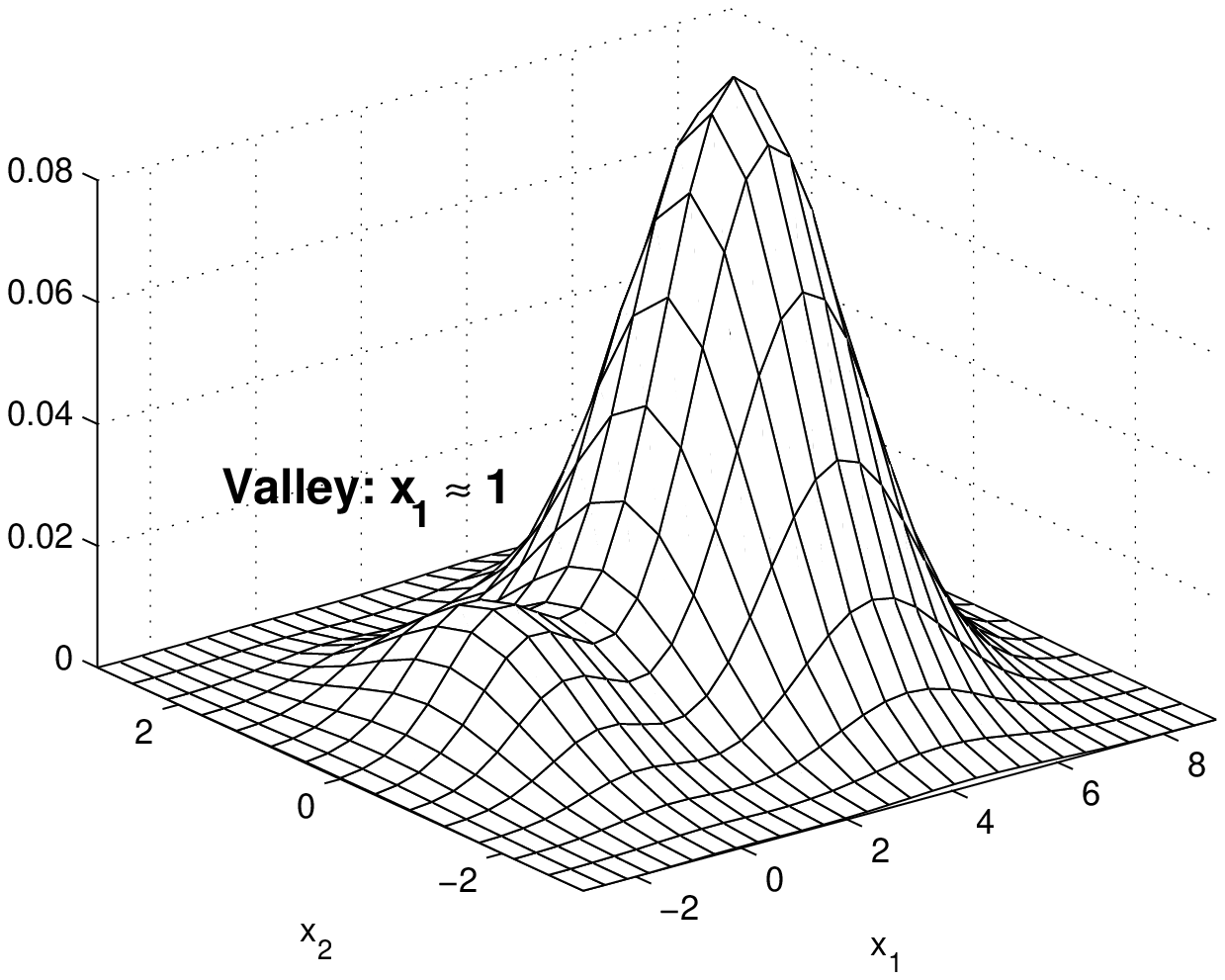}
\makebox[5.2 cm]{\small (a) pdf}
\end{minipage}
\begin{minipage}[t]{.45\textwidth}
\includegraphics[width = 1\textwidth]{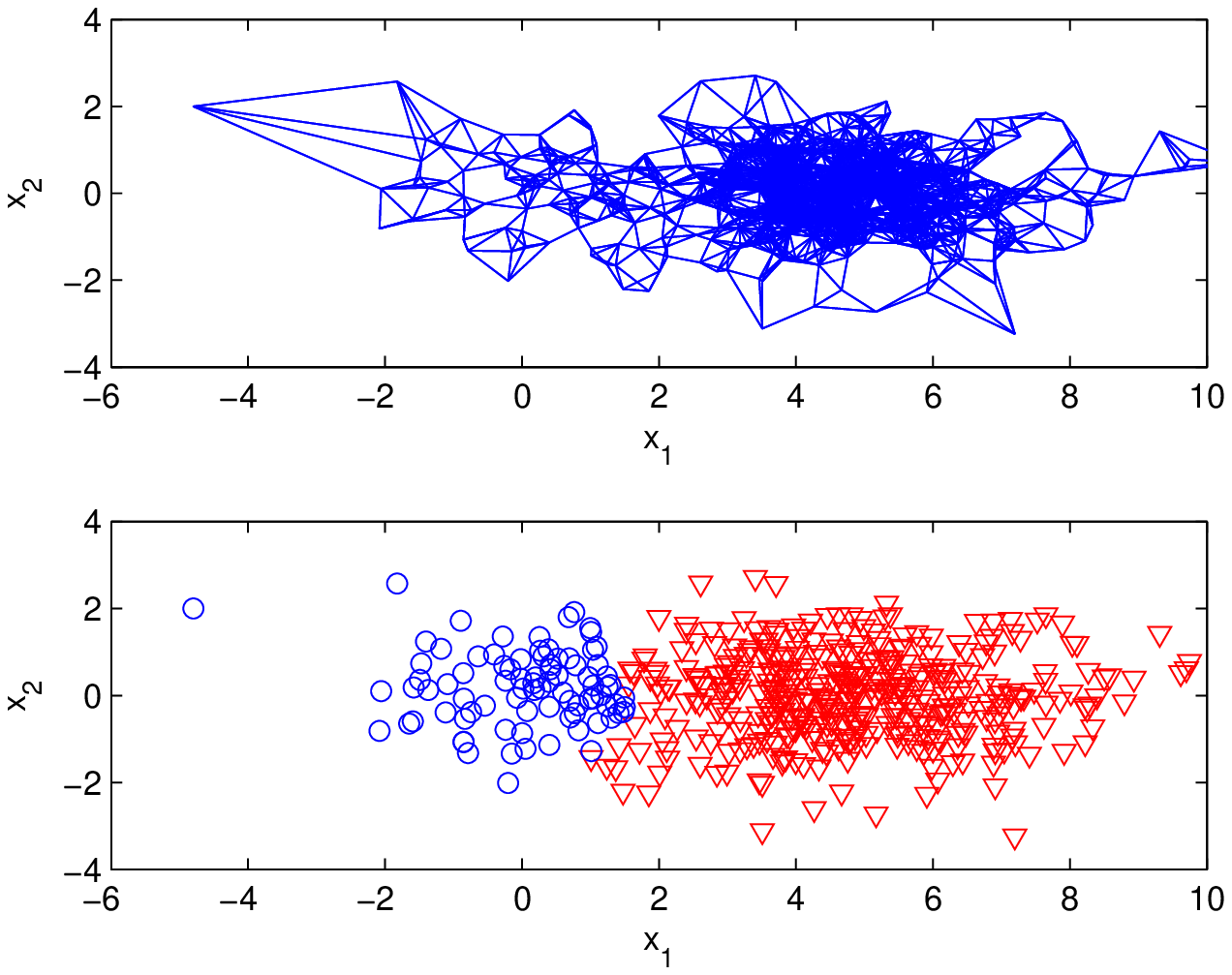}
\makebox[5.2 cm]{\small (b) result of RMD(our method) }
\end{minipage}
\begin{minipage}[t]{.45\textwidth}
\includegraphics[width = 1\textwidth]{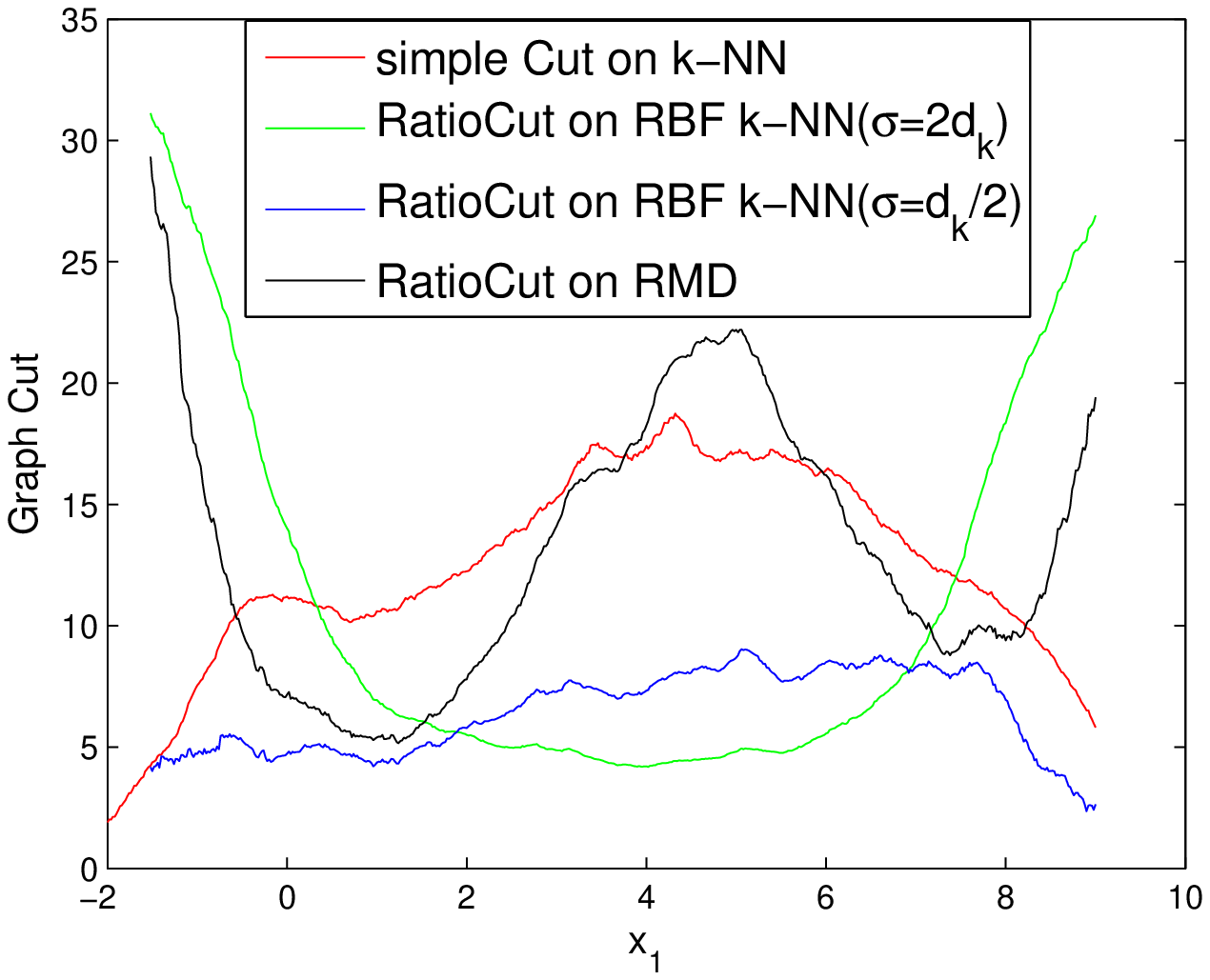}
\makebox[5.2 cm]{\small (c) Ratio Cut of $k$-NN and RMD }
\end{minipage}
\begin{minipage}[t]{.45\textwidth}
\includegraphics[width = 1\textwidth]{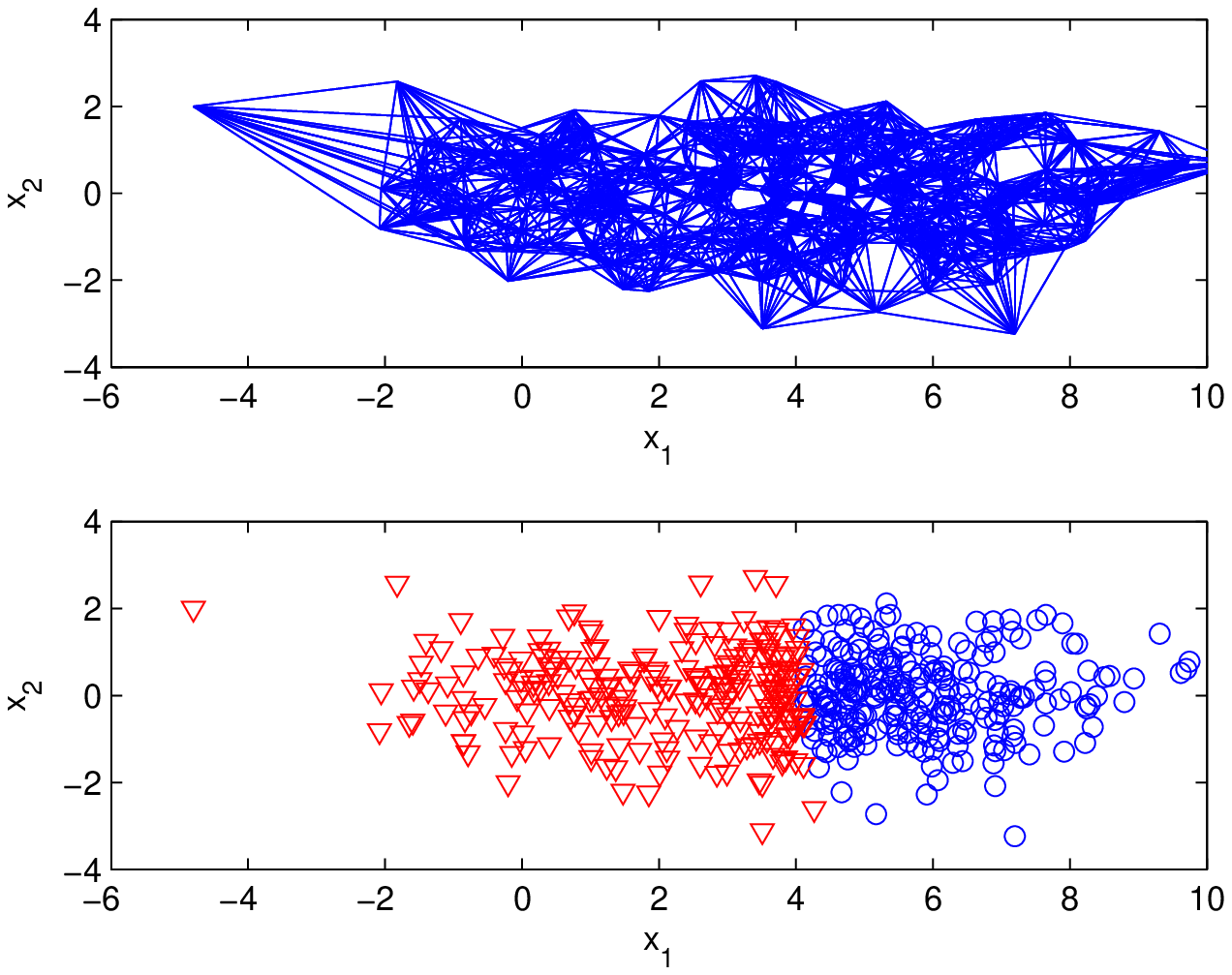}
\makebox[5.2 cm]{\small (d) result of $k$-NN}
\end{minipage}
\begin{minipage}[t]{.45\textwidth}
\includegraphics[width = 1\textwidth]{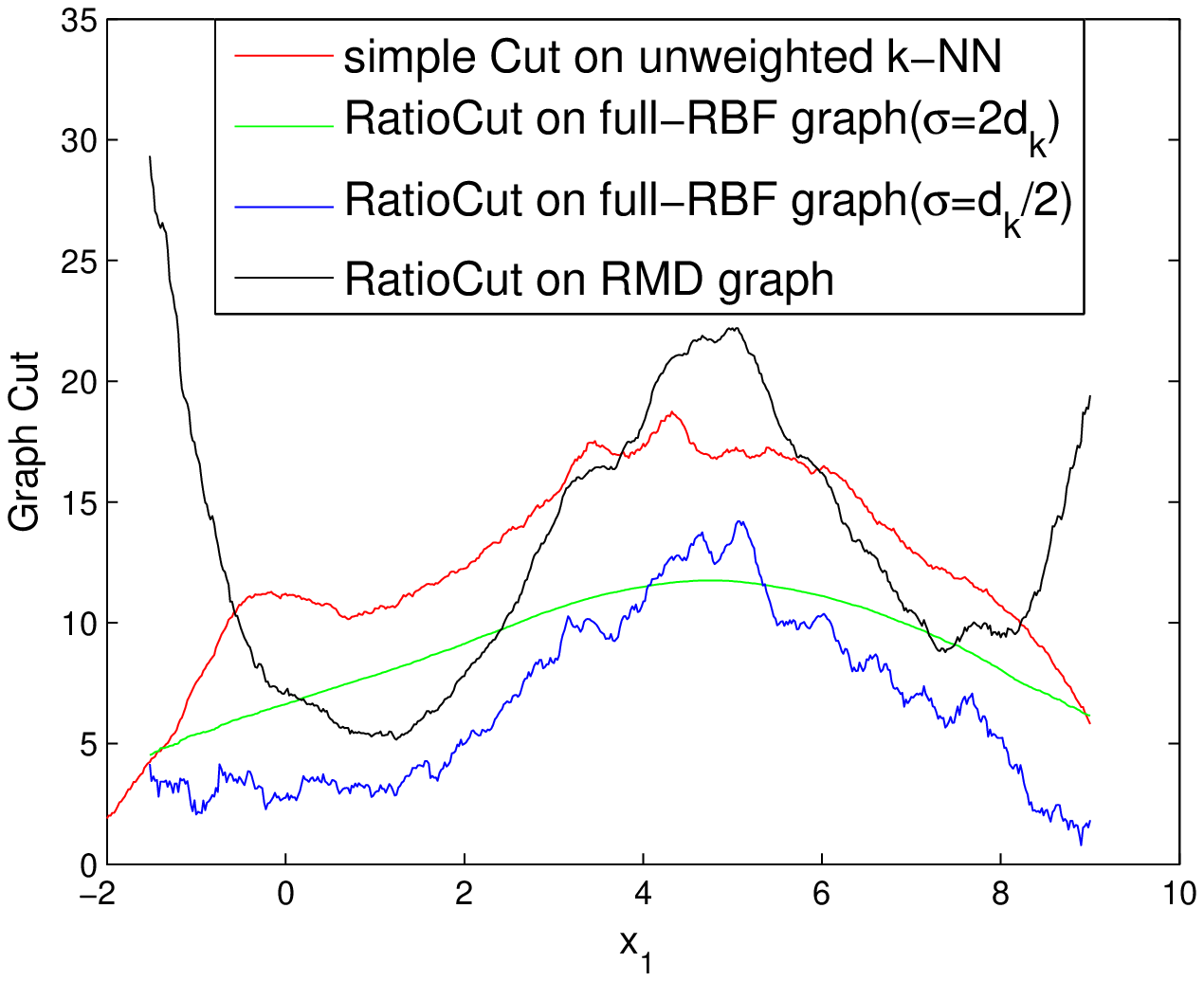}
\makebox[5.2 cm]{\small (e) Ratio Cut of full-RBF and RMD}
\end{minipage}
\begin{minipage}[t]{.45\textwidth}
\includegraphics[width = 1\textwidth]{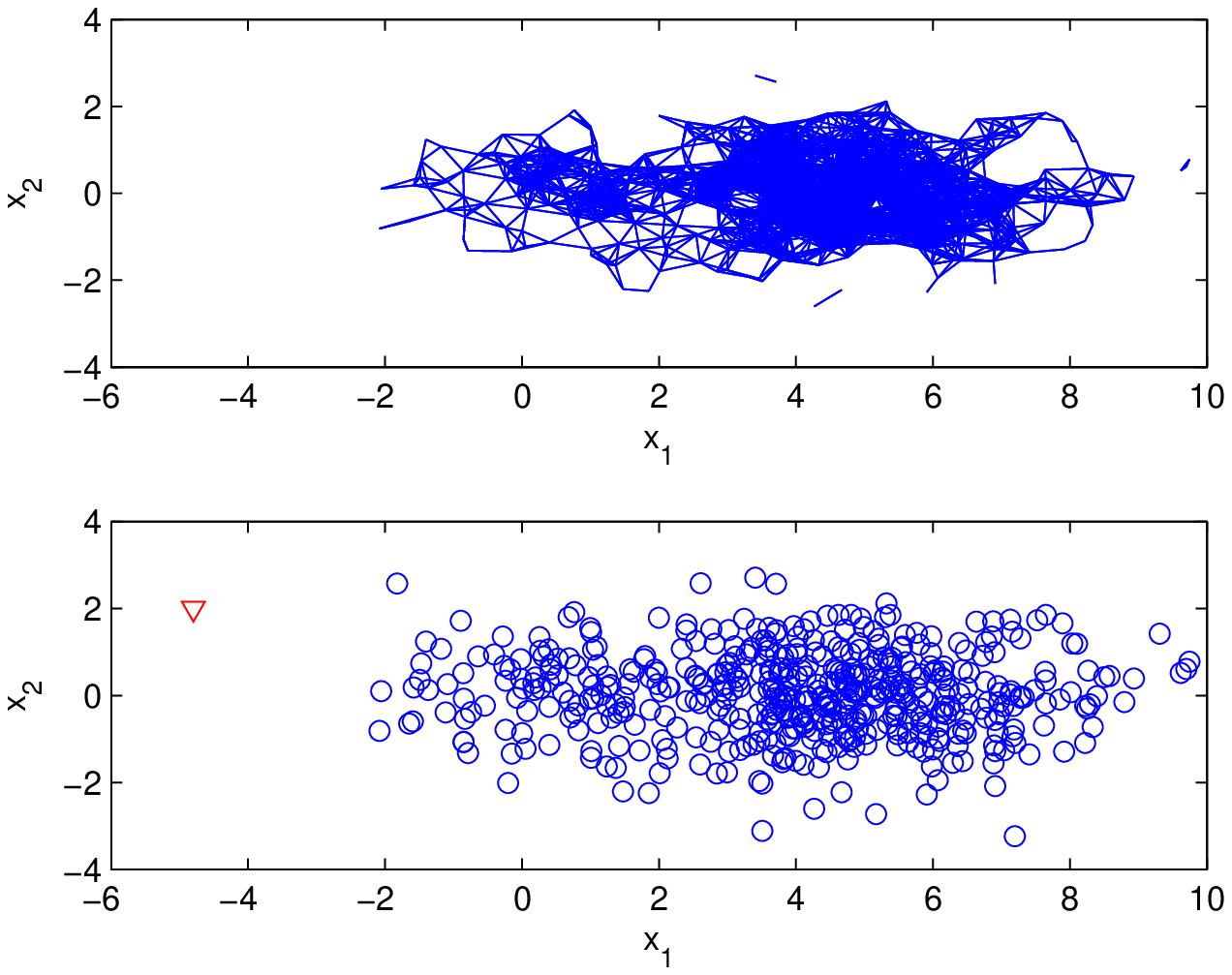}
\makebox[5.2 cm]{\small (f) result of full-RBF($\epsilon$-graph)}
\end{minipage}
\caption{\small Various graphs and SC results. Cut and RatioCut values of (c),(e) are averaged over 20 Monte Carlo runs. The values are re-scaled here for demonstration. $\bar{d}_k$ is the average $k$-NN distance. $n=1000$, $k=30$. For (b) unweighted RMD graph with $l=30, \lambda=0.4$; for (d) unweighted $k$-NN; for (f) $\epsilon=\sigma=\bar{d}_{k}$ is used.}
\label{fig:2g_graph}
\end{centering}
\vspace*{-0.2in}
\end{figure*}
For our illustrative experiment we consider $n=1000$ data samples drawn IID from a proximal and unbalanced 2-D gaussian mixture density,
\begin{equation} \label{e.mixgauss}
f(x) \triangleq \sum^{2}_{i=1}\alpha_iN(\mu_i,\Sigma_i)
\end{equation}
where $\alpha_1$=0.9, $\alpha_2$=0.1, $\mu_1$=[4.5;0], $\mu_2$=[0;0], $\Sigma_1=diag(2,1), \Sigma_2=I$, as shown in Fig.\ref{fig:2g_graph}. We examine different graph constructions including full-RBF, (RBF) $k$-NN and $\epsilon$-graph. Note that these graph constructions are parameterized by $k$, $\sigma$, $\epsilon$. Our SC results here are depicted for reasonable choices of these parameters.

A balanced cut in this case is approximately a line parallel to $x_2$ axis passing through $x_1 = 4$. A cut at the density valley is approximately a line parallel to $x_2$ axis passing through $x_1=1$. For the SC to seek a cut at the valley we would need the RatioCut to achieve its minimum at $x_1 = 1$.

The re-scaled simple Cut curve in (c),(e) shows that the Cut value is relatively large at $x_1=1$ due the fact that the density valley is "shallow."  Fig.1(c) shows RatioCut values for RBF $k$-NN for large and small $\sigma$ values. Large $\sigma$ (unweighted $k$-NN behaves similarly) achieves minimum at the balanced position ($x_1\approx 4$); while small $\sigma$ pulls down RatioCut near the boundaries and turns out to be vulnerable to outliers. (e) shows fhat full-RBF($\epsilon$-graph behaves similarly) with large $\sigma$ tends to smooth out the curve and is insensitive to location of the valley, while small $\sigma$ appears to be vulnerable to outliers. In contrast our method, RMD, appears to be able to reject outliers and achieves minimum RatioCut close to the valley position. \\


\noindent
{\bf Graph Partitioning, Cut-values, and Cluster Sizes:}
By varying $\alpha_i$ in Eq.(\ref{e.mixgauss}) we can vary the size of unbalanced clusters; varying $\mu_i,\sigma_i$ has the effect of varying proximity of the clusters. For a given value of $\alpha_i, \mu_i, \sigma_i$, we let $S_U$ be the locus of points corresponding to the density valley (for example in Fig.1 this is the line $x_1=1$), and $S_B$ any line that asymptotically results in two balanced partitions (for example in Fig.1 this is the line $x_1=4$).
Now for a graph $G=(V,E)$, the lines $S_U$ and $S_B$ describe two different partitions, one unbalanced but respecting the inherent clustering of data and the other balanced but not respecting the underlying data clusters.
We denote by $C_U, \bar C_U$ the partitions resulting from a cut associated with the line $S_U$ and by $C_B,\,\bar C_B$ the partitions resulting from a cut associated with the line $S_B$ \footnote{data samples situated exactly on the line $S_U$ or $S_B$ are randomly assigned.}. The Cut-ratio $q$ is defined as the ratio of the Cut values corresponding to the two partitions; $y$ denotes the size of unbalanced partition, namely,
\begin{equation} \label{e.cutratio}
q= {Cut(C_U,\bar C_U) \over Cut(C_B,\bar C_B)},\,\,\,\,\, y = {1 \over n} \min \{|C_U|,\,|\bar C_U|\}
\end{equation}
Now we examine the condition when the natural unbalanced partition has a smaller RatioCut value than the balanced partition. This requires that,
\begin{equation} \label{e.limits}
    Cut(C_U,\bar C_U)(\frac{1}{y n}+\frac{1}{(1-y)n})<Cut(C_B,\bar C_B) (\frac{1}{n/2}+\frac{1}{n/2}) \Longrightarrow q < 4y \left(1-y \right)
\end{equation}
where we have substituted for $q$ from Eq.(\ref{e.cutratio}). A plot of the Cut-ratio $q$ for different unbalanced proportions $y$ is shown in Fig.2.

\begin{wrapfigure}{r}{0.45\textwidth}
\vspace{-15pt}
\centering
\includegraphics[height=.4\textwidth]{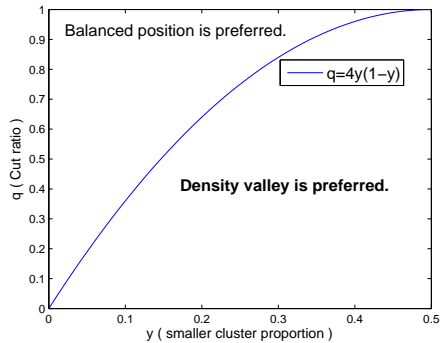}
\caption{\small Cut-ratio ($q$) vs unbalanced cluster size ($y$). Ratio Cut value is smaller for balanced cuts over natural unbalanced cuts whenever the cut-ratio is above the curve.}
\label{fig:qy}
\end{wrapfigure}
\emph{Consequently, Fig.~2 and Eq.(\ref{e.limits}) points to a fundamental aspect of RatioCut for datasets with unbalanced and proximal clusters.} If the tuple $(q,y)$ lies above the curve, RatioCut value is smaller for balanced partitions than partitioning at density valley (note $y \approx 0.1$ required Cut-ratio can be as small as $0.36$). \\

\noindent
{\bf Why do conventional graphs fail?}
This is best explained by understanding the limit-cut analysis results for $k$-NN, $\epsilon$-graph and full-RBF graphs \cite{Maier1,Narayanan06}. For appropriately chosen parameters, $k$, $\sigma$ and $\epsilon$ respectively, as the number of samples $n\rightarrow \infty$, the Cut ratio $q$ and the unbalanced cluster size $y$ converges (with high probability) to:
\begin{equation} \label{e.limval}
q \stackrel{n \rightarrow \infty}{\longrightarrow} {\int_{S_U} f^{\gamma}(x)dx \over \int_{S_B} f^{\gamma}(x)dx},\,\,\,\,\, y \stackrel{n \rightarrow \infty}{\longrightarrow} \min \{\mu(C_U),\,\mu(\bar C_U)\}
\end{equation}
where $\mu(C_U),\,\mu(\bar C_U)$ are the volumes (probability) of sets $C_U$ and $\bar C_U$ under density $f(x)$ respectively. $\gamma$ is a constant and depends on the specific graph construction.
While standard graph construction methods do account for the underlying density $f(x)$, this by itself is insufficient for proximal and unbalanced clusters. For the mixture Gaussian case (Eq.(\ref{e.mixgauss})) it follows from Eq.(\ref{e.limval}) that $q$ can be relatively large for an appropriate choice of $\mu_i,\,\sigma_i$ and a fixed choice of unbalancedness, $y$. Note, $y$, is predominantly controlled through mixture proportions $\alpha_i$. Eq.(\ref{e.limits}) and Fig.~2 asserts that in this case RatioCut has a smaller value for balanced partitions even when density valley cut, $S_U$, is the natural choice.

\noindent
{\bf Parameter tuning:}
It is possible that the parameters $k$, $\sigma$, and $\epsilon$ can be tuned to account for unbalancedness. However, large values of $k$, $\sigma$ and $\epsilon$ tends to smooth the underlying distribution (see Fig.~1) and increases the Cut-ratio, which worsens the problem. In contrast decreasing $k$, $\sigma$ and $\epsilon$ below well-understood acceptable thresholds (see \cite{Maier1,Narayanan06}) leads to disconnected graphs and sensitivity to outliers (this is also seen in Fig.~1). While changing parameters $k,\sigma,\epsilon$ can globally modify the graph topology, this has poor control over Cut-ratio. For instance, increasing/decreasing $k$ results in a $k$-NN graph with uniformly larger/smaller number of neighbors for all the nodes and uniformly larger/smaller Cut values for any cut, leading to poor control of Cut-ratio.

\noindent
{\bf Controlling Cut Ratio through Graph Sparsification:}
From the above discussion it is clear that we need to directly control Cut-ratio. We do so by adaptively sparsifying graph neighborhoods.  Neighborhoods for nodes in plausible low-density regions are sparsified and those in high-density regions are ``densified''. By controlling this sparsification/densification the Cut-ratio is controlled and adapted to varying degrees of unbalancedness and proximity. Comparisons between standard constructions and our RMD graph for the Gaussian mixture of Eq.(\ref{e.mixgauss}) are shown in Fig.~1. As seen our method sparsifies low density regions in contrast to other methods.

\section{RMD Graphs: Main Steps}\label{sec:RMD_idea}

Given data samples $\{x_1,\ldots,x_n\}$ in $\mathbb{R}^d$, our rank-modulated degree(RMD) graph based learning involves the following steps:

\noindent
{\bf (1) Rank Computation:}
The rank $R(x)$ of every point $x$ is calculated:
\begin{eqnarray}\label{eq:grank}
  R(x) = \frac{1}{n}\sum_{i=1}^n\mathbb{I}_{\{G(x)\leq G(x_i)\}}
\end{eqnarray}
where $\mathbb{I}$ denotes the indicator function. Ideally we would like to choose $G(\cdot)$ to be the underlying density, $f(\cdot)$ of the data. Since $f$ is unknown, we need to employ some surrogate statistic. While many choices are possible, the statistic in this paper is based on $k$-nearest neighbor distances. Such rank based statistics have been employed for high-dimensional anomaly detection~\cite{zhaonips,zhaoaistats12}. More choices for $G$ and a robust procedure for computing $R(x)$ are described in Sec.\ref{subsec:rank}. The rank is a normalized ordering of all points based on $G$, ranges in $[0,1]$, and indicates how extreme the sample point $x$ is among all the points.

\noindent
{\bf (2) RMD Graph Construction:}
Connect each point $x$ to its deg($x$) closest neighbors. The number of neighbors deg($x$) for point $x$ is modulated as follows:
\begin{eqnarray}\label{eq:degree}
  deg(x) = k(\lambda+2(1-\lambda)R(x))
\end{eqnarray}
where, $\lambda$ is a scalar parameter that will be optimized later.
Here $k$ is the average degree, $\lambda\in [0,\,1]$ controls the minimum degree. It is not difficult to see that $R(x)$ converges (in distribution) to a uniform measure on the unit interval regardless of the underlying density $f(\cdot)$ if $G(\cdot)$ is bijective. This implies that the expected value converges to 0.5. Consequently, the average degree across all samples is $k$. Furthermore, the above modulation scheme can be thought of as modulating the degree of each node around a nominal value equal to $k$. The remaining issue is to optimize over the scalar parameter $\lambda$, which is described in Step (4). 

\noindent
{\bf (3) Graph-based Learning:}
The third step involves using RMD graph in a graph-based clustering or SSL algorithm. Spectral clustering algorithms based on RatioCut for 2-class and multi-class clustering are now well established. For SSL algorithms we employ Gaussian Random Fields(GRF) and Graph Transduction via Alternating Minimization(GTAM). These approaches all involve minimizing $Tr(F^TLF)$ plus some constraints or penalties, where $F$ is the cluster indicator function or classification (labeling) function, $L$ is the graph Laplacian matrix. This has been shown to be equivalent to minimizing RatioCut(NCut) for unnormalized(normalized) $L$ \cite{Chung96}. We refer readers to references \cite{Luxburg07,zhu08,WanJebCha08} for details.

\noindent
{\bf (4) Optimization over $\lambda$:}
Our final step is to optimize over $\lambda \in [0,\,1]$. Our main assumption is that we have prior knowledge that the smallest cluster is at least of size $\delta n$. We consider the 2-cluster case first. The 2-partitions resulting from spectral clustering algorithms are now parameterized by $\lambda$: $\left(C(\lambda),\bar{C}(\lambda)\right)$. We now optimize the minimum Cut value over all admissible $\lambda$ such that the smallest cluster is no smaller than some threshold $\delta$:
\begin{eqnarray}\label{eq:selection}
  & J(\delta)=\min_{\lambda\in [0,1]}\{Cut(C(\lambda),\bar{C}(\lambda)\} \\
\nonumber
  & s.t. ~~\min\{|C(\lambda)|,|\bar{C}(\lambda)|\}\geq \delta n
\end{eqnarray}
$\delta$ sets the threshold of minimum cluster size, which means clusters of smaller sizes than $\delta n$ are viewed as outliers and will be discarded. Algorithms for K-partition clusters and SSL algorithms can be extended in a similar manner by optimizing suitable objective functions in place of the 2-partition cut value.
Note that a similar optimization step can also be applied to select the best $k$ and $\sigma$ with traditional graph constructions as well. We will employ this strategy for the purpose of comparison on real data sets in Sec.\ref{subsec:real}.


\subsection{Rank Computation}\label{subsec:rank}
The missing component in our RMD method is the specification of the statistic $G$. We choose the statistic $G$ in Eq.(\ref{eq:grank}) based on nearest-neighbor distances. Specifically,
\begin{equation}\label{equ:G(x)}
G(x)=\frac{1}{l}\sum^{l+\lfloor \frac{l}{2} \rfloor}_{i=l-\lfloor \frac{l-1}{2} \rfloor}D_{(i)}(x)
\end{equation}
where $D_{(i)}(x)$ denotes the distance from $x$ to its $i$-th nearest neighbor, and $G$ is the average of $x$'s $\frac{l}{2}$-th to $\frac{3l}{2}$-th nearest neighbor distances. Other choices for $G$ are listed below.

\noindent
(1) $\epsilon$-Neighborhood: $G(x)$ is the number of neighbors within an $\epsilon$-ball of $x$.

\noindent
(2) $l$-Nearest Neighorhood: $G(x)$ is the distance from $x$ to its $l$-th nearest neighbor.

Empirically (and theoretically) we have observed that the average nearest neighbor distance leads to better performance and robustness. To reduce variance during rank computation we adopt a U-statistic resampling technique \cite{Korolyuk94} with $B$ resamplings.

\noindent
\textbf{U-statistic Resampling For Rank Computation:}\\
Given $N=2m$ data points, \\
(a) Randomly split the data into two equal parts: $S_1=\{x_1,...,x_m\}$, $S_2=\{x_{m+1},...,x_{2m}\}$. \\
(b) Points in $S_2$ are used to calculate $G$ for $x_i \in S_1$ according to Eq.(\ref{equ:G(x)}), and vice versa. \\
(c) Ranks of $x_i\in S_1$ are computed by Eq.(\ref{eq:grank}) within $S_1$ and similarly for $x_i\in S_2$. \\
(d) Resplit the data and repeat the above steps $B$ times. Let $R_b(x_i)$ be the rank of $x_i$ obtained from the $b$-th resampling. We then use the average as the final rank:
\begin{equation}
    R(x_i)=\frac{1}{B}\sum_{b=1}^{B}R_b(x_i),~~~~i=1,2,\ldots,N
\end{equation}



\noindent
{\bf Properties of the Ranked Data:}

\noindent
{\bf (1) High/Low Density Indicator:} The value of $R(x)$ is a direct indicator of whether $x$ lies in high/low density regions(Fig.\ref{fig:rwcont}).

\noindent
{\bf (2) Smoothness:}
$R(x)$ is the integral of pdf asymptotically(see Thm.\ref{rank-pvalue} in Sec.\ref{sec:thm}). It's smooth and uniformly distributed in $[0,1]$. This makes it appropriate to modulate the degrees with control of minimum, maximal and average degree.

\noindent
{\bf (3) Precision:} We do not need our estimates to be precise for every point; the resulting cuts will typically depend on relatively low ranks rather than the exact value, of most nearby points.

\begin{wrapfigure}{r}{0.45\textwidth}
\vspace{-20pt}
\centering 
\includegraphics[width=0.4\textwidth]{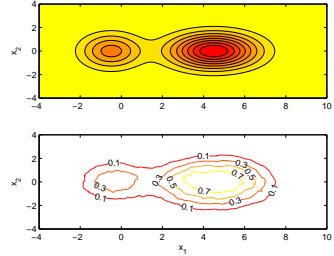}
\caption{\small Density level sets \& rank estimates for unbalanced and proximal gaussian mixtures. High/low ranks correspond to high/low density levels.}
\label{fig:rwcont}
\end{wrapfigure}

\subsection{Salient Properties of RMD Graphs}\label{subsec:degree}

Our scheme successfully solves the following issues:

\noindent
{\bf (1) Captures density valley: }
The monotonicity of deg($x$) in $R(x)$ immediately implies that nodes in low/high density areas will have fewer/more edges, thus reducing cut-ratio $q$ in Fig.\ref{fig:qy} and ensuring that the RatioCut has low values at density valleys.

\noindent
{\bf (2) Robustifies against Outliers: }
The minimum degree of nodes in RMD graph is $k\lambda$, even for distant outliers. Furthermore, $\lambda$ is the solution to the optimization step (see Eq.~\ref{eq:selection}), and so is robust to outliers as shown in Fig.\ref{fig:2g_graph}(c), where the RatioCut curve of RMD graph(black) goes up near boundaries, guaranteeing the valley minimum is the global minimum.

\noindent
{\bf (3) Adapts to Unbalanced Clusters: }
The optimization problem of Eq.(\ref{eq:selection}) leads to sizable clusters that can be unbalanced. The reason is that small values of $\lambda$ emphasize the Cut value over the balancing term. This has the effect of preferring smaller Cut values with possibly unbalanced partitions over balanced partitions with larger Cut values. This effect is magnified because smaller $\lambda$ leads to sparser connections at low-density areas. Since the balancing term is not impacted, varying $\lambda$ from 1 to 0 moves the partition from the relatively balanced position toward the density valley (see also Thm.\ref{part2} in Sec.\ref{sec:thm}). Practically, $\lambda$ provides a flexibility to optimize the tradeoff between the simple Cut and the cluster size. The cluster-size threshold $\delta$ in the optimization step (Eq.(\ref{eq:selection})) is used to constrain clusters that are not too small, thus avoiding outliers. We can also iterate over $\delta$ to find possibly different valley cuts of different sizes. This procedure can sometimes be used for size-constrained clustering~\cite{Hoppner08}. We will demonstrate some of these ideas in Sec.5.3.

\section{Analysis}\label{sec:thm}

The proofs of theorems here appear in the Appendix section.
Assume the data set $\{x_1,\ldots,x_n\}$  is drawn i.i.d. from density $f$ in $\mathbb{R}^d$. $f$ has a compact support $C$. Let $G=(V,E)$ be the RMD graph. Given a separating hyperplane $S$, denote $C^+$,$C^-$ as two subsets of $C$ split by $S$, $\eta_d$ the volume of unit ball in $\mathbb{R}^d$.

First we show the asymptotic consistency of the rank $R(y)$ of some point $y$. The limit of $R(y)$, $p(y)$, is the complement of the volume of the level set containing $y$. Note that $p$ exactly follows the shape of $f$, and always ranges in $[0,1]$ no matter how $f$ scales.

\begin{theorem}\label{rank-pvalue}
Assume the density $f$ satisfies some regularity conditions. For a proper choice of parameters of $G$, as $n\rightarrow\infty$, we have
\begin{equation}
    R(y)\rightarrow p(y):= \int_{\left\{x:f(x)\leq
f(y)\right\}}f(x)dx.
\end{equation}
\end{theorem}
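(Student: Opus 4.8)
The plan is to reduce the statement to a law of large numbers for the empirical average defining $R(y)$, and then to replace the comparison of the statistics $G$ by a comparison of the underlying densities. Fix $y$ and treat $x_1,\dots,x_n$ as i.i.d.\ draws from $f$. Since
\begin{equation}
R(y)=\frac{1}{n}\sum_{i=1}^n \mathbb{I}_{\{G(y)\le G(x_i)\}},
\end{equation}
I would first argue that $R(y)$ concentrates around $\mathbb{P}_{X\sim f}\!\left[G(y)\le G(X)\right]$. The U-statistic resampling of Sec.~\ref{subsec:rank}, which computes each $G(x_i)$ from a held-out half of the sample, is precisely what makes the summands (conditionally) independent and lets the LLN apply despite all $G$-values being built from the same point cloud.

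The central step is to show that, once its common scale is removed, $G(x)$ is a deterministic strictly decreasing function of the density at $x$. Using standard nearest-neighbor distance asymptotics, for $x$ in the interior of $C$ with $f(x)>0$ the $i$-th nearest-neighbor distance satisfies $n\eta_d D_{(i)}(x)^d f(x)\approx i$; with $l=l_n\to\infty$ and $l_n/n\to0$ the averaged statistic therefore obeys
\begin{equation}
\frac{G(x)}{a_n}\;\longrightarrow\; g(f(x))=f(x)^{-1/d},\qquad a_n=c\,\Big(\frac{l}{n\eta_d}\Big)^{1/d},
\end{equation}
where $c$ absorbs the averaging over ranks $l/2$ through $3l/2$ and $g$ is strictly decreasing. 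Because the positive scale $a_n$ cancels in any comparison, the event $\{G(y)\le G(x_i)\}$ asymptotically coincides with $\{g(f(y))\le g(f(x_i))\}$, i.e.\ with $\{f(x_i)\le f(y)\}$, since $g$ reverses order.

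Combining the two steps yields
\begin{equation}
\mathbb{P}_{X\sim f}\!\left[G(y)\le G(X)\right]\;\longrightarrow\;\mathbb{P}_{X\sim f}\!\left[f(X)\le f(y)\right]=\int_{\{x:f(x)\le f(y)\}}f(x)\,dx=p(y),
\end{equation}
which is the claim. To make the replacement rigorous I would bound
\begin{equation}
\left|\mathbb{P}[G(y)\le G(X)]-\mathbb{P}[f(X)\le f(y)]\right|\le \mathbb{P}\big[\,|f(X)-f(y)|<\tau\,\big]+r_n(\tau),
\end{equation}
where $r_n(\tau)$ is the probability that the concentration of $G(y)$ or $G(X)$ around its limit fails by more than the gap that $g$ assigns to a density difference of $\tau$, and then send $\tau\to 0$ after $n\to\infty$.

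I expect the main obstacle to be controlling this error term, which is exactly where the regularity conditions on $f$ enter. The concentration of the nearest-neighbor statistic must be quantitative enough, with rates uniform over the relevant range of densities, to force $r_n(\tau)\to0$ for a slowly vanishing threshold $\tau$; simultaneously the density mass in a thin shell $\{|f-f(y)|<\tau\}$ about the level set must vanish as $\tau\to0$, which needs $f$ continuous with level sets of Lebesgue measure zero, so that $y$ does not sit on a plateau of $f$. Points near the boundary of the support, where the ball $B(x,r)$ is truncated and the constant $c$ in $a_n$ changes, are the remaining technical nuisance and would be absorbed into the same regularity assumptions.
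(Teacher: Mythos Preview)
Your proposal is correct and follows essentially the same two-step architecture as the paper: (i) concentration of $R(y)$ about its expectation via the sample-splitting/Hoeffding argument, and (ii) convergence of that expectation to $p(y)$ by replacing the comparison $\{G(y)\le G(X)\}$ with $\{f(X)\le f(y)\}$ through nearest-neighbor distance asymptotics, with the error controlled by splitting off a thin density shell $\{|f(X)-f(y)|<\tau\}$ whose mass vanishes under the regularity assumption.

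The only noteworthy difference is in how step~(ii) is executed. You concentrate $G(x)/a_n$ and $G(y)/a_n$ separately about the limit $f(\cdot)^{-1/d}$ and then compare; the paper instead applies McDiarmid's inequality directly to the \emph{difference} $F_x=G(x)-G(u)$ as a function of the held-out points, obtaining $\mathcal{P}(F_x>0)\approx\mathbb{I}_{\{\mathbb{E}F_x>0\}}$ with an explicit exponential error, and then uses a lemma showing $|\mathbb{E}G(x)-A(x)|=O(A(x)(l/m)^{1/d})$ with $A(x)\propto f(x)^{-1/d}$. Working with the difference avoids having to track a common scale $a_n$ and makes the bookkeeping of the two competing error terms (the McDiarmid tail and the shell measure) explicit enough to read off an admissible window $l=m^{\alpha}$, $\tfrac{d+4}{2d+4}<\alpha<1$. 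Your route is equally valid but you would need to quantify your $r_n(\tau)$ with a comparable concentration rate to pin down the ``proper choice of parameters'' clause.
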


Next we study RatioCut induced on unweighted RMD graph(similar for NCut). The limit cut expression on RMD graph involves an additional adjustable term which varies according to the density. This implies the Cut values in high density areas can be significantly more expensive than in low density areas. Notice that this effect becomes stronger when $\lambda$ varies from 1 to 0, which means the minimum will be attained at even smaller density areas. For technical simplicity, we assume RMD graph ideally connects each point $x$ to its deg$(x)$ closest neighbors.

\begin{theorem}\label{part2}
Assume the smoothness assumptions in \cite{Maier1} hold for the density $f$, and $S$ is a fixed hyperplane in $\mathbb{R}^d$. For unweighted RMD graph, set the degrees of points according to Eq.(\ref{eq:degree}), where $\lambda\in(0,1)$ is a constant. Let $\rho(x)=\lambda+2(1-\lambda)p(x)$. Assume $k_n/n\rightarrow{0}$. In case $d$=1, assume $k_n/\sqrt{n}\rightarrow\infty$; in case $d\geq$2 assume $k_n/\log{n}\rightarrow\infty$. Then as $n\rightarrow\infty$ we have that:
\begin{equation} \label{eq:rwncut}
    \frac{1}{k_n}\sqrt[d]{\frac{n}{k_n}}RatioCut_n(S)\longrightarrow  C_d\int_S{f^{1-\frac{1}{d}}(s)\rho(s)^{1+\frac{1}{d}}ds}\left(\mu(C^+)^{-1}+\mu(C^-)^{-1}\right).
\end{equation}
where $C_d = \frac{2\eta_{d-1}}{(d+1)\eta_d^{1+1/d}}$, $\mu(C^{\pm})=\int_{C^{\pm}}f(x)dx$.
\end{theorem}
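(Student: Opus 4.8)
The plan is to extend the limit-cut analysis of \cite{Maier1} for unweighted $k$-NN graphs to a spatially varying, density-dependent degree. The only structural novelty of the RMD graph is that the target degree of a node $x$ is $\deg(x)=k_n\rho_n(x)$ with $\rho_n(x)=\lambda+2(1-\lambda)R(x)$, rather than the constant $k_n$. By Theorem~\ref{rank-pvalue} the rank satisfies $R(x)\to p(x)$, so $\rho_n(x)\to\rho(x)=\lambda+2(1-\lambda)p(x)$, and the first task is to argue that this convergence may be substituted into the cut functional. Since $S$ is a fixed hyperplane and the neighbourhood radii tend to zero, the cut localizes to a vanishing slab around $S$, where $f$ and $\rho$ are (by continuity) essentially constant and equal to their boundary values $f(s),\rho(s)$; this reduces the global object to a one-dimensional integral along the normal direction times an integral over $S$.

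For the expected cut I would set up local coordinates so that $S=\{x_1=0\}$ and use that the neighbourhood radius of a node $x$ is fixed by $\deg(x)=n f(x)\eta_d r_n(x)^d$, i.e.
\[
r_n(x)\approx\left(\frac{\deg(x)}{n f(x)\eta_d}\right)^{1/d}=\left(\frac{k_n\,\rho_n(x)}{n f(x)\eta_d}\right)^{1/d}.
\]
A crossing edge joins a node $u$ at depth $t>0$ to a node $v$ with $v_1<0$ lying in $B(u,r_n(u))$; near $S$ the radii of the two endpoints agree to leading order, so the symmetric OR-rule coincides with this count. The number of far-side neighbours of $u$ is $nf(s)$ times the volume of a spherical cap at depth $t$, and integrating $t$ against the cap volume yields a factor proportional to $r_n(s)^{d+1}$. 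Hence
\[
\mathrm{Cut}_n(S)\approx\frac{\eta_{d-1}}{d+1}\,n^2\int_S f(s)^2\,r_n(s)^{d+1}\,ds,
\]
and substituting $r_n(s)^{d+1}=(k_n/(n\eta_d))^{1+1/d}(\rho(s)/f(s))^{1+1/d}$ produces the integrand $f^{\,1-1/d}\rho^{\,1+1/d}$ together with the prefactor $(k_n/n)^{1+1/d}$. The geometric constant is pinned down by the cap integral and, since it must reduce to the pure $k$-NN result when $\rho\equiv1$ ($\lambda=1$), equals the constant $C_d=\frac{2\eta_{d-1}}{(d+1)\eta_d^{1+1/d}}$ of \cite{Maier1}.

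The balancing term is handled by the law of large numbers: $|C^{\pm}|/n\to\mu(C^{\pm})$, so $1/|C^{\pm}|\approx(n\mu(C^{\pm}))^{-1}$ and $RatioCut_n(S)\approx \mathrm{Cut}_n(S)\cdot n^{-1}(\mu(C^+)^{-1}+\mu(C^-)^{-1})$. Multiplying by the stated normalization $\frac{1}{k_n}\sqrt[d]{n/k_n}$ cancels every power of $n$ and $k_n$ and leaves the claimed limit. It then remains to upgrade these expectation-level identities to convergence in probability, invoking concentration of the $k$-NN distances and of the number of points in the boundary slab; this is exactly where the growth conditions $k_n/n\to0$, $k_n/\sqrt n\to\infty$ ($d=1$) and $k_n/\log n\to\infty$ ($d\ge2$) enter, paralleling \cite{Maier1,Narayanan06}.

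I expect the main obstacle to be the interaction between the estimated ranks and the cut concentration: because $\deg(x)$ depends on the random statistic $R(x)$, the neighbourhood radius $r_n(x)$ is itself random and data-dependent, adding a layer of randomness absent in the fixed-degree $k$-NN analysis. Making the substitution $\rho_n\to\rho$ rigorous therefore requires Theorem~\ref{rank-pvalue} to hold \emph{uniformly} over the boundary slab, with an error that remains negligible after integration against $f^{\,1-1/d}$ along $S$, and the variance bound of \cite{Maier1} must be re-derived for variable degrees. Controlling these two coupled sources of randomness simultaneously is the crux of the argument.
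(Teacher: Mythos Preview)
Your proposal matches the paper's argument almost step for step: the paper too splits the claim into a cut limit and a balancing-term limit, handles the former via the expected-cut computation of \cite{Maier2} with the variable radius $r_n(x)\approx(k_n\rho(x)/(n f(x)\eta_d))^{1/d}$, decomposes the integral into the tangential and normal directions to $S$, evaluates the spherical-cap integral to obtain the $f^{1-1/d}\rho^{1+1/d}$ integrand, and then upgrades to concentration via McDiarmid under the stated growth conditions on $k_n$.

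One point worth flagging: the obstacle you identify at the end --- that $\rho_n(x)=\lambda+2(1-\lambda)R(x)$ is itself data-dependent and so the neighbourhood radii carry an extra layer of randomness --- is a real issue, but the paper does not resolve it either. In the sentence immediately preceding the theorem the authors write ``For technical simplicity, we assume RMD graph ideally connects each point $x$ to its $\deg(x)$ closest neighbors,'' i.e.\ they work directly with the deterministic $\rho(x)=\lambda+2(1-\lambda)p(x)$ and never confront the uniform-in-$x$ version of Theorem~\ref{rank-pvalue}. So your instinct is correct, but for the purposes of reproducing the paper's proof you may take the idealized degree as given; the coupling of rank estimation with cut concentration that you describe would be a genuine strengthening beyond what is proved here.
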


Compared to the limit expression on $k$-NN graph(\cite{Maier1}), there is an additional term $\rho(x)=(\lambda+ 2(1-\lambda)p(x))$ here. To see the impact suppose $\lambda$ is small; we see that for $S$ near modes, $p(x)\approx 1$ and this extra term is nearly $(2)^{1+\frac{1}{d}}$. For $S$ passing valleys this term is nearly $(\lambda)^{1+\frac{1}{d}}<1$. So graph-cut value near modes are penalized more than valleys.

\section{Simulations}\label{sec:experiment}
Many of the examples in this section focus on the unbalanced datasets. Unbalanced data is obtained by sampling the data set in an unbalanced way. Some general simulation parameters are:\\
{\bf (1)} In U-statistic rank calculation (Sec.\ref{subsec:rank}), we fix the resampling time $B=5$.\\
{\bf (2)} All error rate results are averaged over 20 trials.\\
Other parameters will be specified below.

\subsection{Multi-Cluster Complex-Shaped Clusters}\label{subsec:syn}
\begin{figure*}[tb]
\begin{centering}
\begin{minipage}[t]{.46\textwidth}
\includegraphics[width = 1\textwidth]{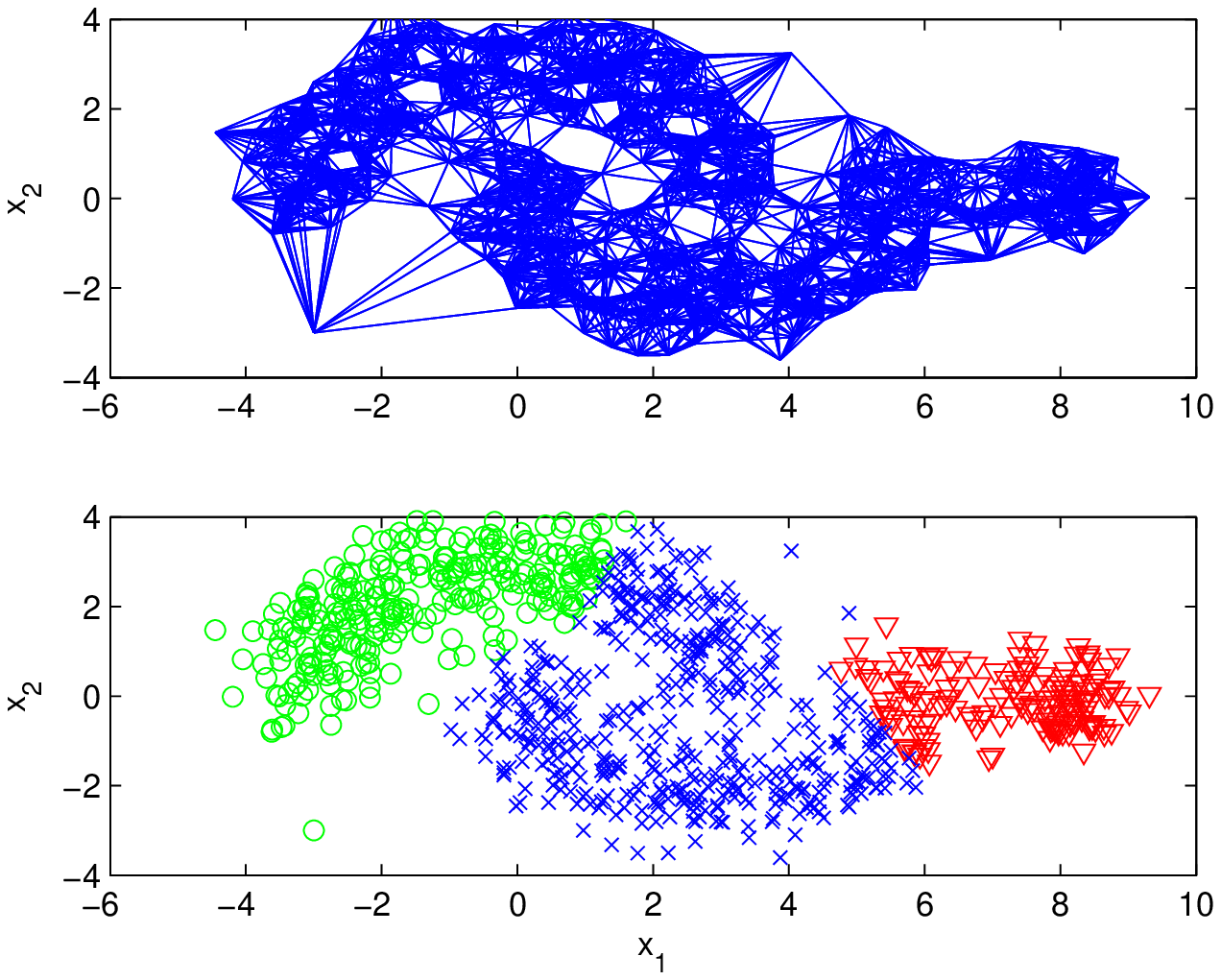}
\makebox[5.5cm]{\small (a) $k$-NN}
\end{minipage}
\begin{minipage}[t]{.46\textwidth}
\includegraphics[width = 1\textwidth]{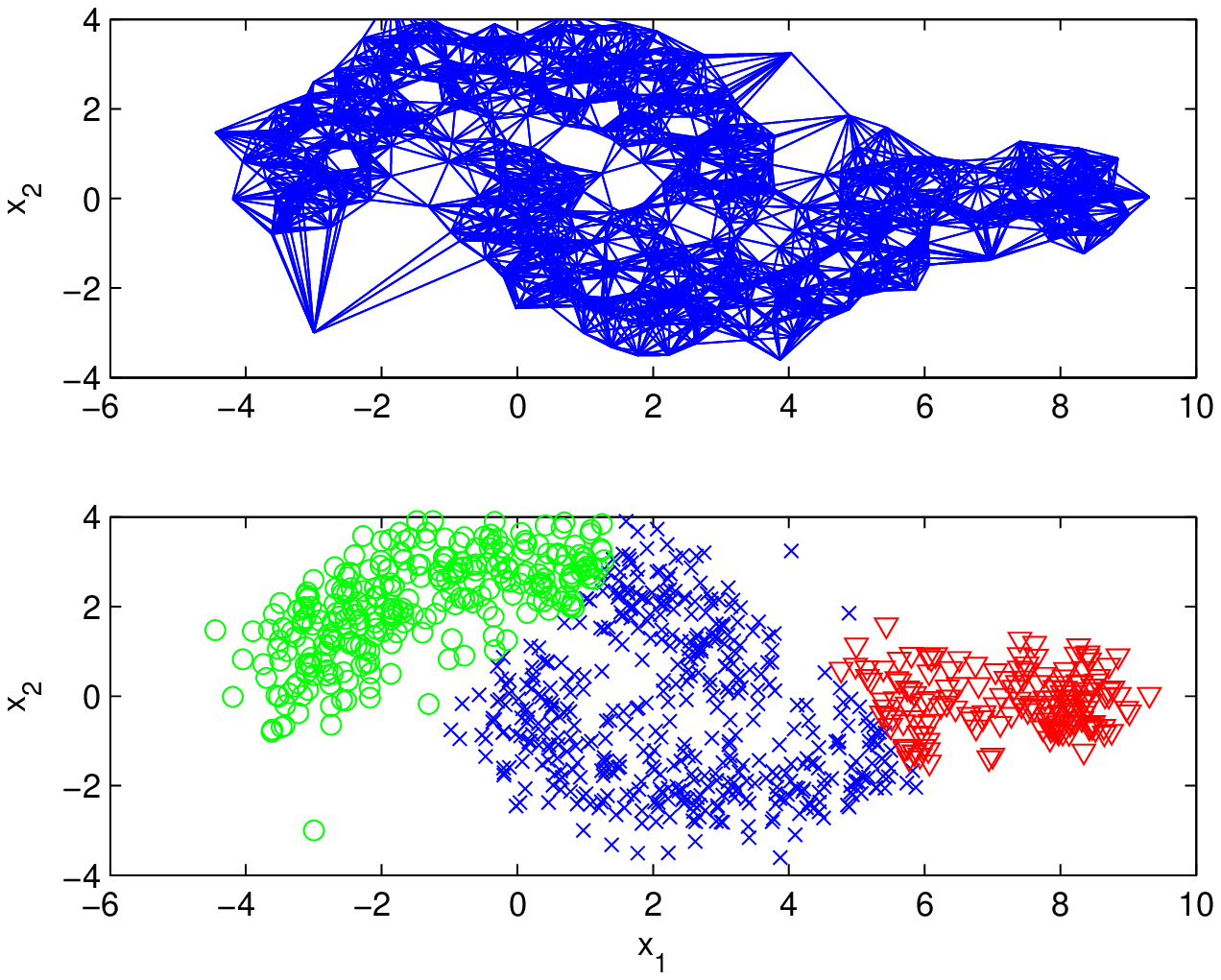}
\makebox[5.5cm]{\small (b) $b$-matching}
\end{minipage}
\begin{minipage}[t]{.46\textwidth}
\includegraphics[width = 1\textwidth]{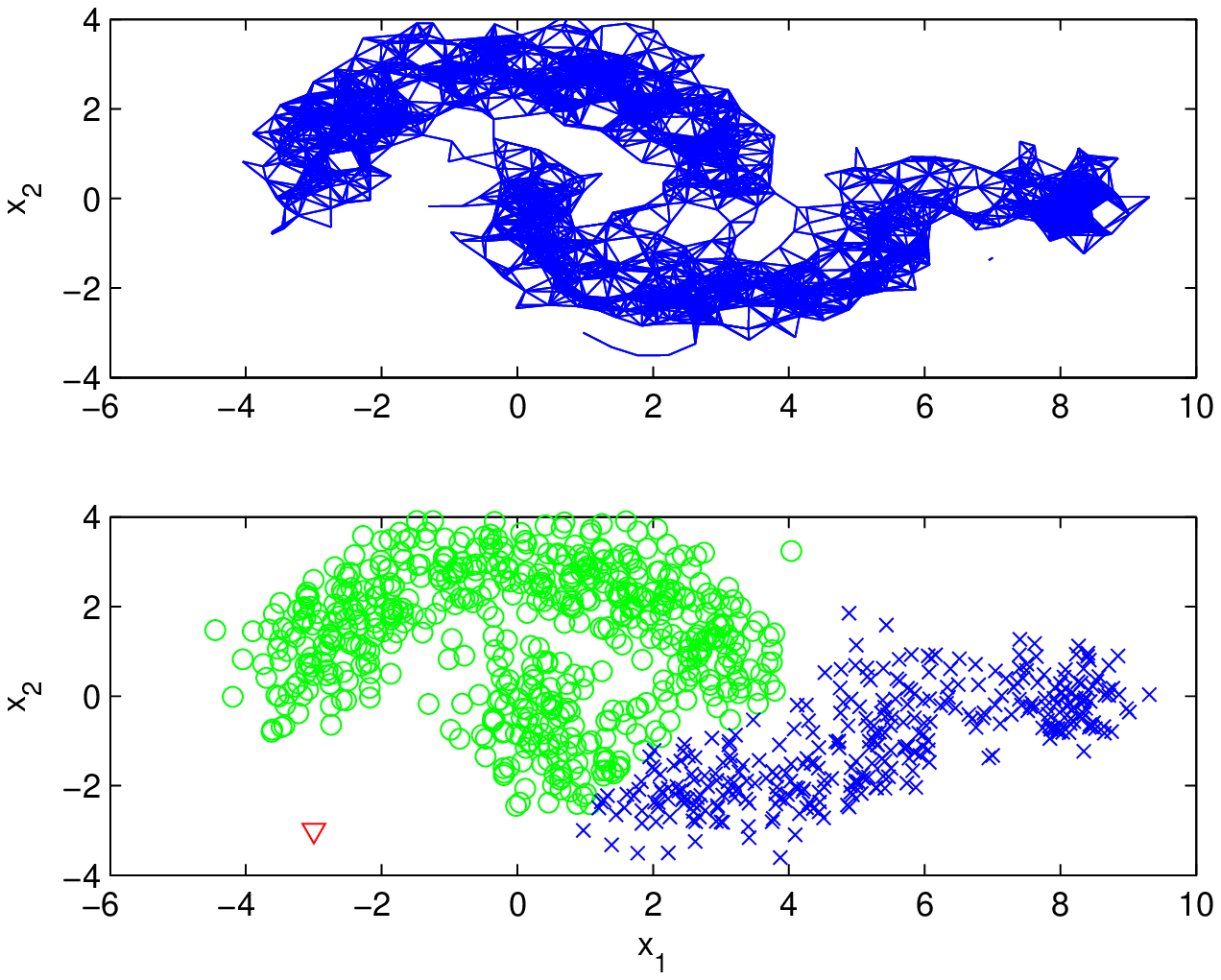}
\makebox[5.5cm]{\small (c) $\epsilon$-graph(full-RBF)}
\end{minipage}
\begin{minipage}[t]{.46\textwidth}
\includegraphics[width = 1\textwidth]{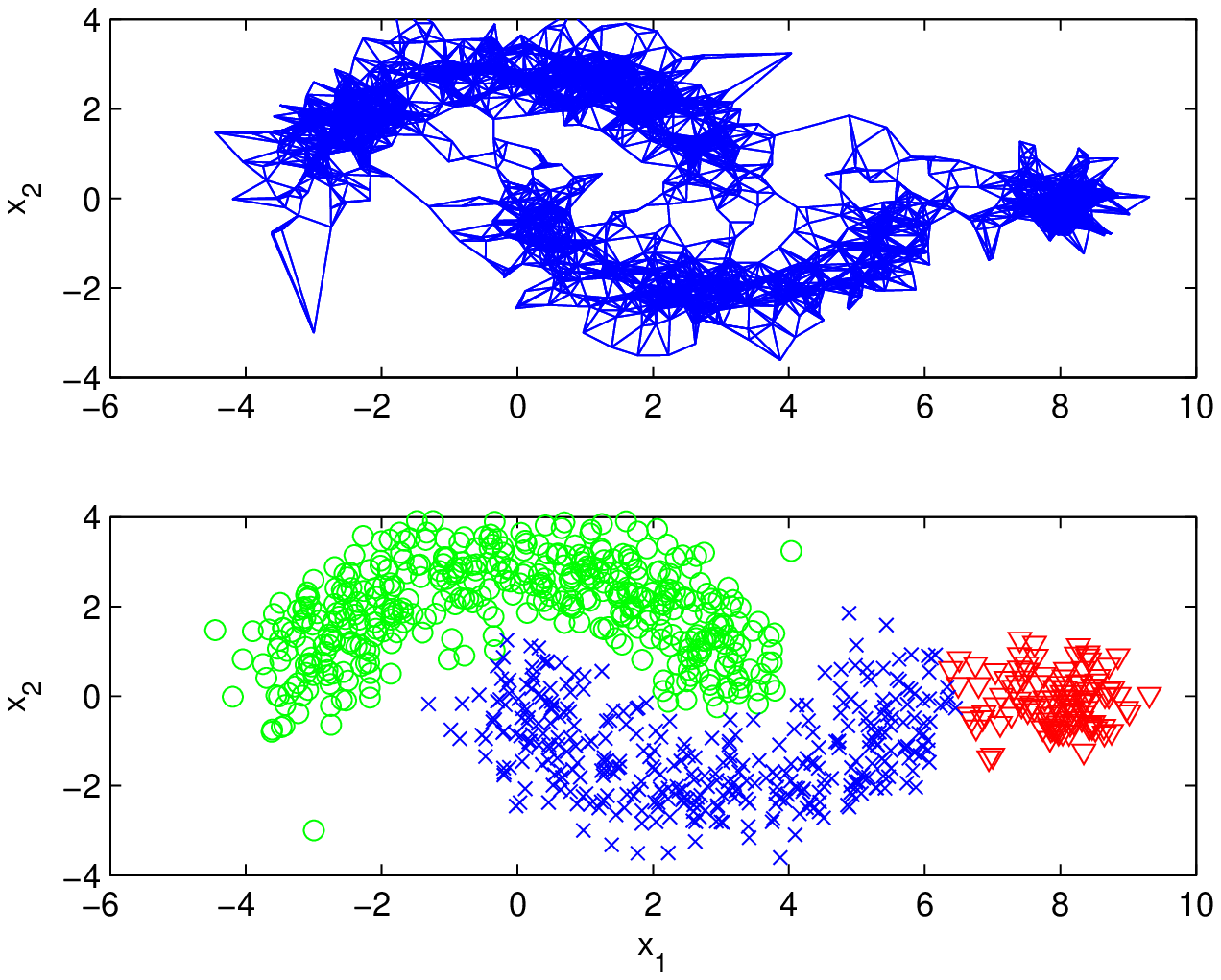}
\makebox[5.5cm]{\small (d) RMD}
\end{minipage}
\caption{\small Graphs and clustering results of SC on 2 moons and 1 gaussian data set. SC on full-RBF($\epsilon$-graph) completely fails due to the outlier. For $k$-NN and $b$-matching graphs SC cannot recognize the long winding low-density regions between 2 moons, and fails to find the rightmost small cluster. Our method significantly sparsifies the graph at low-density regions, enabling SC to cut along the curved valley, detect the small cluster and is robust to outliers as well.}
\label{fig:complex_shape}
\end{centering}
\end{figure*}
Consider a data set composed of 1 small Gaussian and 2 moon-shaped proximal clusters shown in Fig.\ref{fig:complex_shape}. Sample size $n=1000$ with the rightmost small cluster $10\%$ and two moons $45\%$ each. In this example, for the purpose of illustration, we did not optimize $\lambda$ or any of the other parameters. We fix $\lambda = 0.5$, and choose $k=l=30$, $\epsilon=\sigma=\tilde{d}_k$, where $\tilde{d}_k$ is the average $k$-NN distance. On $k$-NN and $b$-matching graphs SC fails for two reasons: (1) SC cuts at balanced positions and cannot detect the rightmost small cluster; (2) SC cannot recognize the long winding low-density regions between 2 moons because there are too many spurious edges and the Cut value along the curve is big. SC fails on $\epsilon$-graph(similar on full-RBF) because the outlier point forms a singleton cluster, and also cannot recognize the low-density curve. RMD graph significantly sparsifies the graph at low-density regions, enabling SC to cut along the winding valley, detect the small cluster and is robust to outliers. Naturally, these results depend on choices of $k$, $\sigma$, and $\epsilon$. However, our choices represent the best case scenarios for these methods and we did not see any significant improvements by varying these parameters.

\subsection{Real DataSets}\label{subsec:real}
We focus on unbalanced settings and consider several real data sets. We construct $k$-NN, $b$-match, full-RBF and RMD graphs all combined with RBF weights, but do not include the $\epsilon$-graph because of its overall poor performance.
For fairness of comparison, we vary not only $\lambda$ of RMD but also $k$, $\sigma$ under the optimization step in Sec.\ref{sec:RMD_idea}. For example, the result of RBF $k$-NN graph is chosen based on optimizing the following expression:
\begin{eqnarray}
  & J(\delta)=\min_{k,\sigma}\{Cut\left(C(k,\sigma),\bar{C}(k,\sigma)\right)\} \\
\nonumber
  & s.t. ~~\min\{|C(k,\sigma)|,|\bar{C}(k,\sigma)|\}\geq \delta n
\end{eqnarray}
where, $C(k,\sigma),\,\bar C(k,\sigma)$ denotes the RatioCut partition obtained on the RBF $k$-NN graph with nearest neighbor parameter $k$ and RBF parameter $\sigma$.
The optimization problem is non-convex but involves search over a small number of parameters. We discretized the parameters in our experiments. We varied $k$ in $\{20,30,...,100\}$.
For the RBF parameter $\sigma$ it has been suggested that it should be of the same scale as the average $k$-NN distance $\tilde{d}_k$\cite{WanJebCha08}. This suggested a discretization of $\sigma$ as $2^j \tilde{d}_k$ with $j=-4,\,-3,\ldots,\,4$.
We discretized $\lambda \in [0,1]$ in steps of $0.2$. Notice that for $\lambda=1$, RMD graph is identical to $k$-NN graph. $l$ is set identical to $k$. We assume meaningful clusters are at least $5\%$ of the total number of points $\delta=0.05$. We set the GTAM parameter $\mu=0.05$\cite{JebWanCha09} for the SSL applications. For each SSL run 20 randomly labeled samples are chosen with at least one sample from each class.

\begin{figure*}[tb]
\begin{centering}
\begin{minipage}[t]{.45\textwidth}
\includegraphics[width = 1\textwidth]{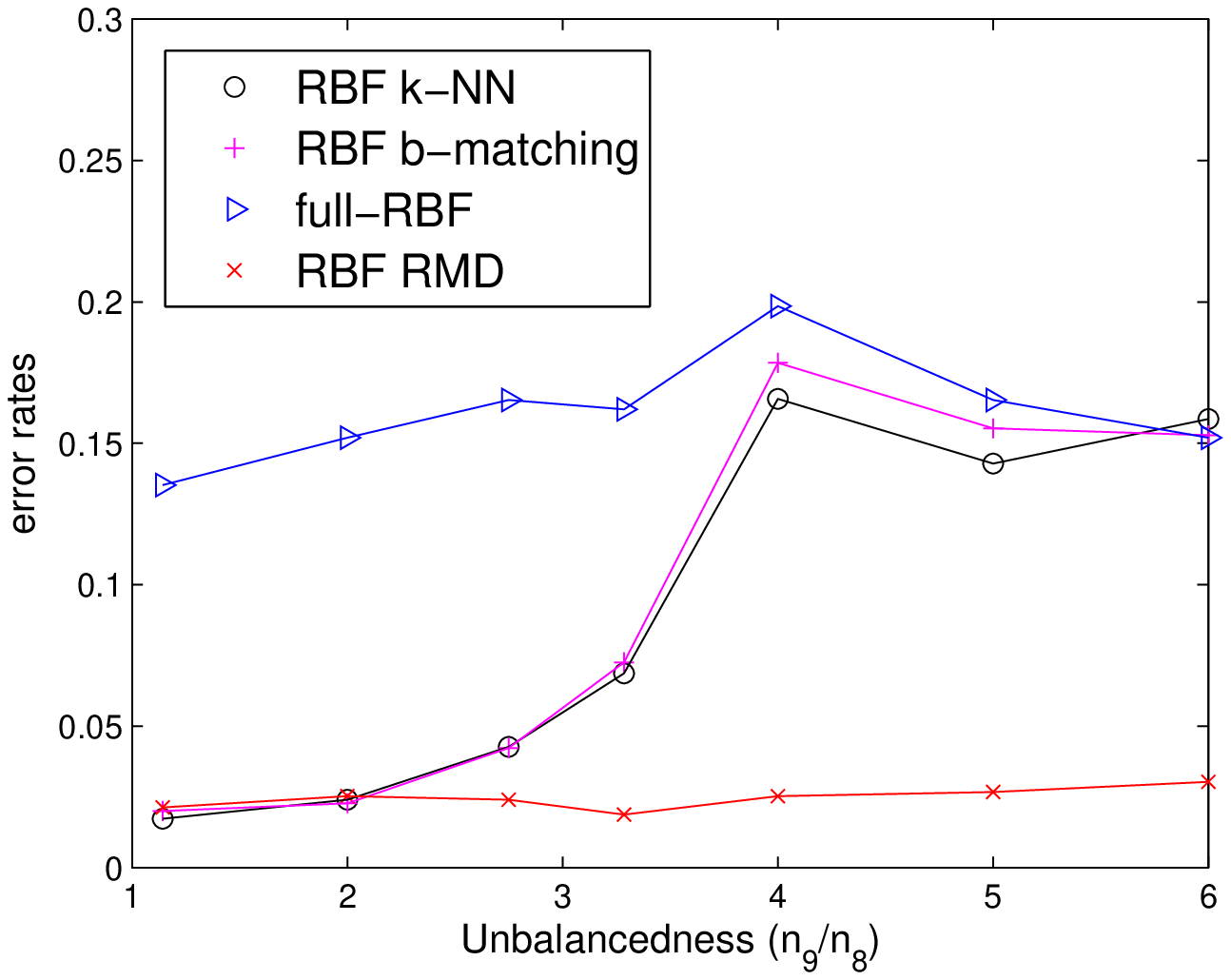}
\makebox[3.7cm]{\small (a) SC on USPS 8vs9}
\end{minipage}
\begin{minipage}[t]{.45\textwidth}
\includegraphics[width = 1\textwidth]{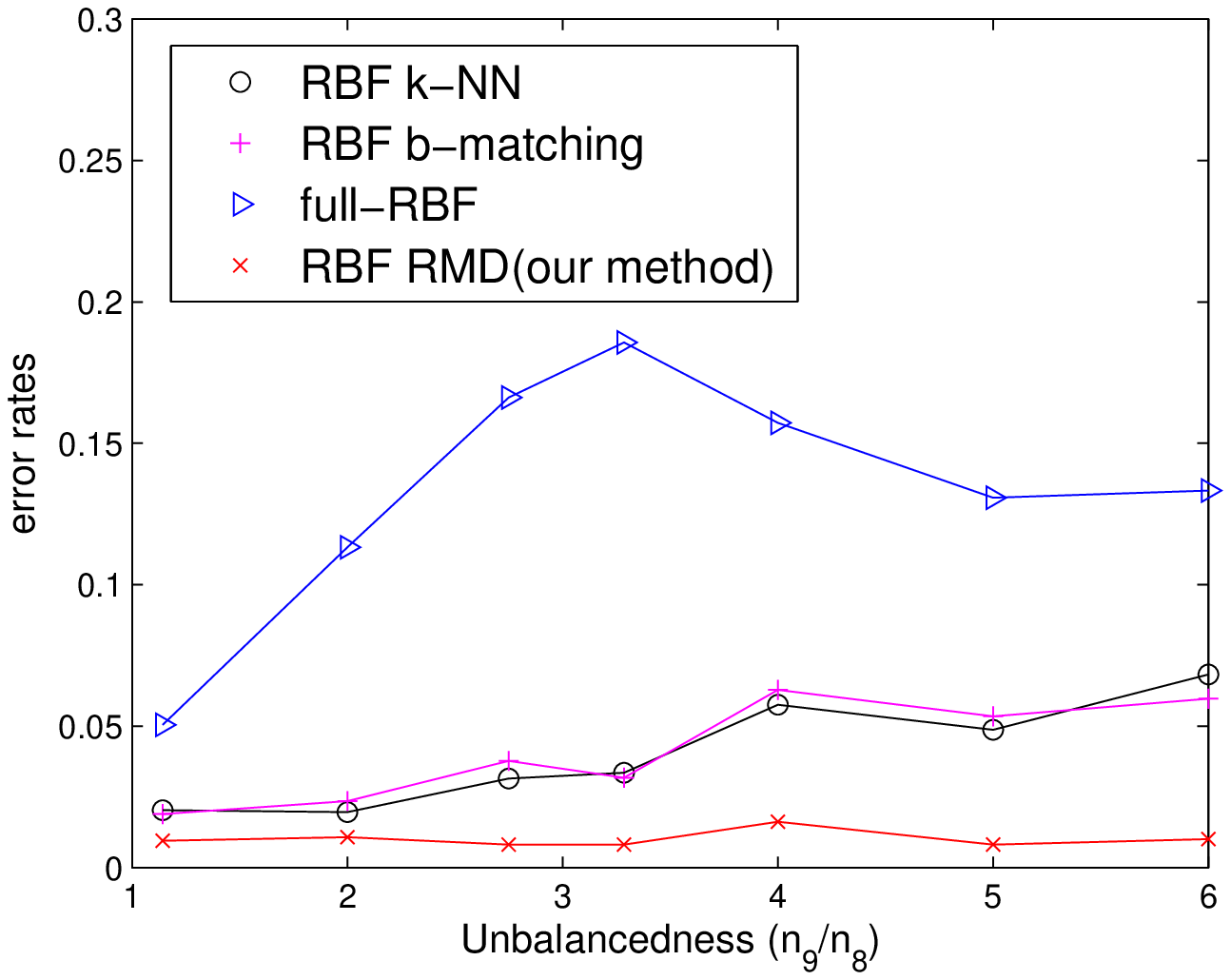}
\makebox[3.7cm]{\small (b) GTAM on USPS 8vs9}
\end{minipage}
\caption{\small Error rate performance of SC and GTAM on 8vs9 of USPS digit dataset with varying levels of unbalancedness. We omitted GRF since the results are qualitatively similar. Notice that not only $\lambda$ but also $k$, $\sigma$ have been optimized. Our method adapts to different levels of unbalancedness much better than traditional graphs.}
\label{fig:USPS8v9}
\end{centering}
\end{figure*}
\textbf{Varying Unbalancedness:}
We start with a comparison for 8vs9 of the 256-dim USPS digit data set. We keep the total sample size as 750, and vary the unbalancedness, i.e. the proportion of numbers of points from two clusters, denoted by $n_8,n_9$. Fig.\ref{fig:USPS8v9} shows that as the unbalancedness increases, the performance severely degrades on traditional graphs, while our method can adapt the graph-based learning algorithms to different levels of unbalancedness very well.

\begin{table*}[tb]
\begin{center}
\begin{tabular}{|c||c|c|c|c|c|c|c|c|c|c|}
  \hline
  \multirow{2}{*}{Error Rates(\%)}   &   \multicolumn{2}{c|}{USPS}  &   \multicolumn{3}{c|}{SatImg}  &   \multicolumn{3}{c|}{OptDigit}   & \multicolumn{2}{c|}{LetterRec} \\
  \cline{2-11}
  & 8vs9 & 1,8,3,9 & 4vs3 & 3,4,5 & 1,4,7 & 9vs8 & 6vs8 & 1,4,8,9 & 6vs7 & 6,7,8 \\
  \hline\hline
  RBF $k$-NN        & 16.67 & 13.21 & 12.80 & 18.94 & 25.33 & 9.67  & 10.76  & 26.76 & 4.89 & 37.72 \\
  RBF $b$-matching  & 17.33 & 12.75 & 12.73 & 18.86 & 25.67 & 10.11  & 11.44  & 28.53 & 5.13 & 38.33 \\
  full-RBF          & 19.87 & 16.56 & 18.59 & 21.33 & 34.69 & 11.61 & 15.47 & 36.22 & 7.45 & 35.98 \\
  RBF RMD           & 4.80  & 9.18 & 7.87 & 15.26 & 19.72 & 5.43  & 6.67  & 21.35 & 2.92 & 28.68 \\
  \hline
\end{tabular}
\end{center}
\caption{\small Error rate performance of Spectral Clustering on various graphs for unbalanced real data sets. Notice that not only $\lambda$ but also $k$, $\sigma$ are optimized. Our method performs significantly better than other methods.}
\label{tab:real_SC}
\end{table*}
\begin{table*}[tb]
\begin{center}
\begin{tabular}{|c|c||c|c|c|c|c|c|c|c|c|}
  \hline
  \multicolumn{2}{|c||}{\multirow{2}{*}{Error Rates(\%)}}  &   \multicolumn{2}{c|}{USPS}  &   \multicolumn{2}{c|}{SatImg}  &   \multicolumn{3}{c|}{OptDigit}   & \multicolumn{2}{c|}{LetterRec} \\
  \cline{3-11}
  \multicolumn{2}{|c||}{}  & 8vs6 & 1,8,3,9 & 4vs3 & 1,4,7 & 6vs8 & 8vs9 & 6,1,8 & 6vs7 & 6,7,8 \\
  \hline\hline
  \multirow{4}{*}{GRF}
    & RBF $k$-NN            & 5.70 & 13.29 & 14.64 & 16.68 & 5.68  & 7.57  & 7.53 & 7.67 & 28.33 \\
    & RBF $b$-matching      & 6.02 & 13.06 & 13.89 & 16.22 & 5.95  & 7.85  & 7.92 & 7.82 & 29.21 \\
    & full-RBF              & 15.41 & 12.37 & 14.22 & 17.58 & 5.62 & 9.28 & 7.74 & 11.52 & 28.91 \\
    & RBF RMD               & 1.08  & 10.24 & 9.74 & 15.04 & 2.07  & 2.30  & 5.82 & 5.23 & 27.24 \\
  \hline
  \multirow{4}{*}{GTAM}
    & RBF $k$-NN            & 4.11  & 10.88 & 26.63 & 20.68 & 11.76 & 5.74  & 12.68 & 19.45 & 27.66 \\
    & RBF $b$-matching      & 3.96  & 10.83 & 27.03 & 20.83 & 12.48 & 5.65  & 12.28 & 18.85 & 28.01 \\
    & full-RBF              & 16.98  & 11.28 & 18.82 & 21.16 & 13.59 & 7.73 & 13.09 & 18.66 & 30.28 \\
    & RBF RMD               & 1.22  & 9.13 & 18.68 & 19.24 & 5.81  & 3.12  & 10.73 & 15.67 & 25.19 \\
  \hline
\end{tabular}
\end{center}
\caption{\small Error rate performance of GRF and GTAM on various graphs for unbalanced real data sets. Notice that not only $\lambda$ but also $k$, $\sigma$ are optimized to achieve best performance. Our method performs significantly better than other methods.}
\label{tab:real_SSL}
\end{table*}
\textbf{Other Real Data Sets:}
We apply SC and SSL algorithms on several other real data sets including USPS, waveform database generator(21-dim), Statlog landsat satellite images(36-dim), letter recognition images(16-dim) and optical recognition of handwritten digits(64-dim) \cite{uci10}.
We fix 150/600, 200/400/600, 200/300/400/500 samples for 2,3,4-class cases, with corresponding orders of class indices listed in Tab.\ref{tab:real_SC},\ref{tab:real_SSL}.
Tab.\ref{tab:real_SC},\ref{tab:real_SSL} shows that even when $k$ and $\sigma$ for RBF $k$-NN($b$-matching) and full-RBF graphs are optimized to achieve optimal performance, RMD graph still consistently outperforms other methods.

\subsection{Applications to Small Cluster Detection}
We illustrate how our method can be used to find small-size clusters. This type of problem arises in community detection in large real networks, where graph-based approaches are popular but small-size community detection is difficult \cite{Shah10}.

Our synthetic dataset depicted in Fig.~\ref{fig:multcluster} has 1 large and 2 small proximal Gaussian components along $x_1$ axis: $\sum^{3}_{i=1}\alpha_iN(\mu_i,\Sigma_i)$, where $\alpha_1:\alpha_2:\alpha_3=2:8:1$, $\mu_1$=[-0.7;0], $\mu_2$=[4.5;0], $\mu_3$=[9.7;0], $\Sigma_1=I, \Sigma_2=diag(2,1), \Sigma_3=0.7I$.

Fig.\ref{fig:multiple_cuts}(a) shows a plot of cut values for different cut positions averaged over 20 Monte Carlo runs. We note that the cut-value plot resembles the underlying density. Two density valleys are both at the unbalanced positions. The rightmost cluster is smaller than the left cluster, but has a deeper valley.

\begin{wrapfigure}{r}{.5\textwidth}
\vspace{-20pt}
\centering
\includegraphics[width=.47\textwidth]{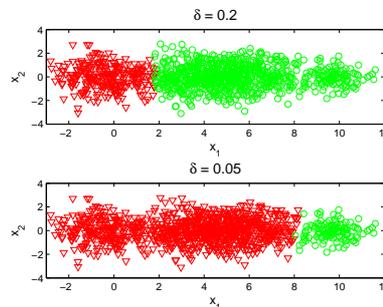}
\vspace{-10pt}
\caption{\small Gaussian mixture with three unbalanced Gaussian components. Results of our method is depicted for a single realization. Our method is able to discover two small clusters. The larger cluster is detected for a larger value of $\delta$ and the smaller cluster is detected for a smaller $\delta$ value(see Eq.~\ref{eq:selection}).}
\label{fig:multcluster}
\end{wrapfigure}
To apply our method we vary the cluster-size threshold $\delta$ in Eq.(\ref{eq:selection}). We can now plot the Cut-value against $\delta$ as shown in Fig.\ref{fig:multiple_cuts}(b). As seen in Fig.\ref{fig:multiple_cuts}(b), when $\delta\geq 0.3$, the optimal cut is close to the valley. However, since the proportion of data samples in the smaller clusters is less than 30\% we see that the optimal cut is bounded away from both valleys. As $\delta$ is further decreased, namely, in the range $0.25\geq\delta\geq0.15$, the optimal cut is now attained at the left valley($x_1\approx 1.8$). An interesting phenomena is that the curve flattens out in this range.
This corresponds to the fact that the cut value is minimized at this position ($x_1 = 1.8$) for any value of $\delta \in [.15,\,.25]$. This flattening out can happen only at valleys since valleys represent a ``local'' minima for the optimization step of Eq.~\ref{eq:selection} under the constraint imposed by $\delta$. Consequently, small clusters can be detected based on the flat spots. Next when we further vary $\delta$ in the region $0.1\geq\delta\geq0.05$, the best cut is attained near the right and deeper valley($x_1\approx 8.2$). Again the curve flattens out revealing another small cluster.
\begin{figure}[tb]
\centering
\subfigure[\small Cut value vs. cut position]{
\includegraphics[width = 5cm]{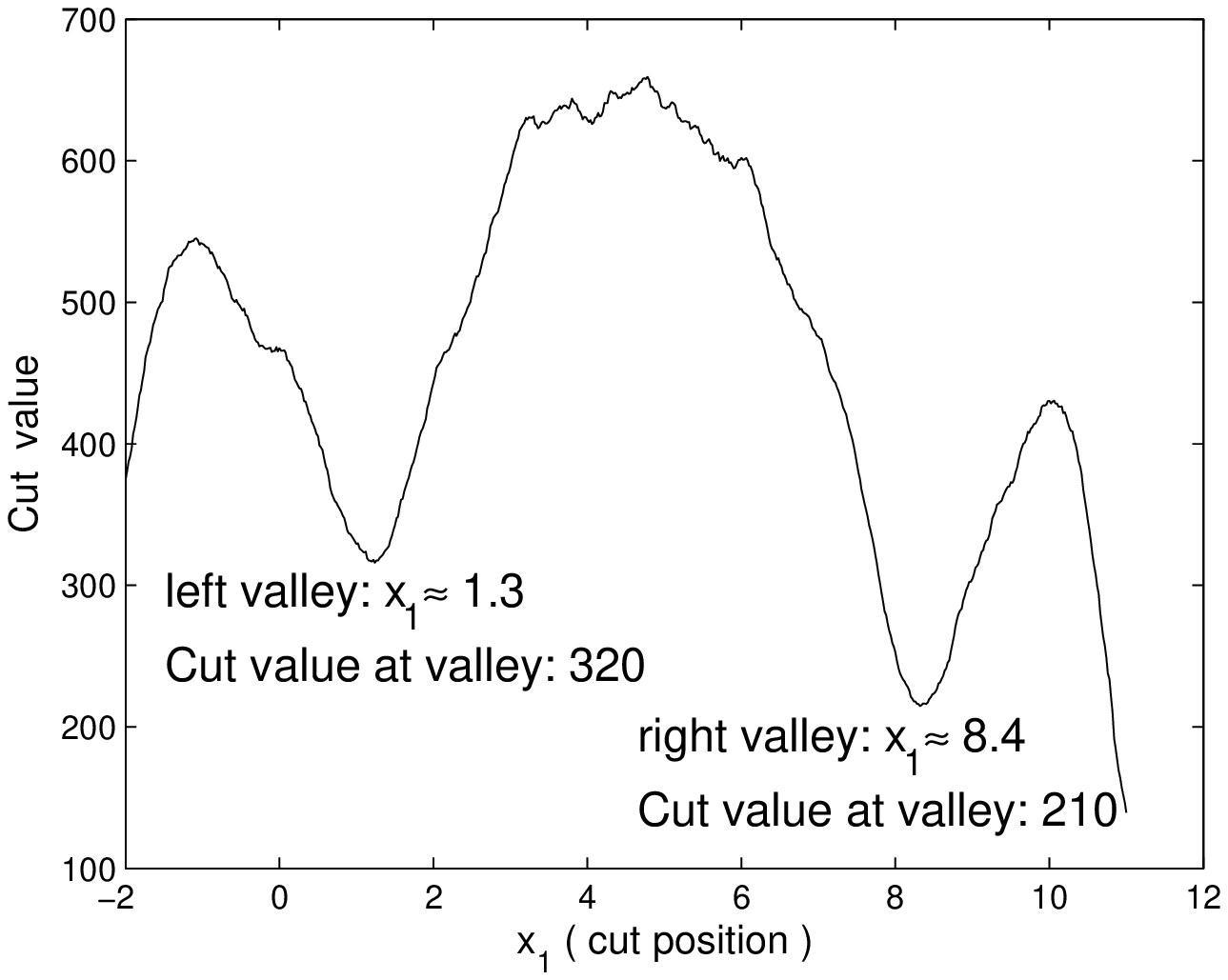}}
\subfigure[\small \small Cut value vs. Cluster size($\delta$)]{
\includegraphics[width = 5cm]{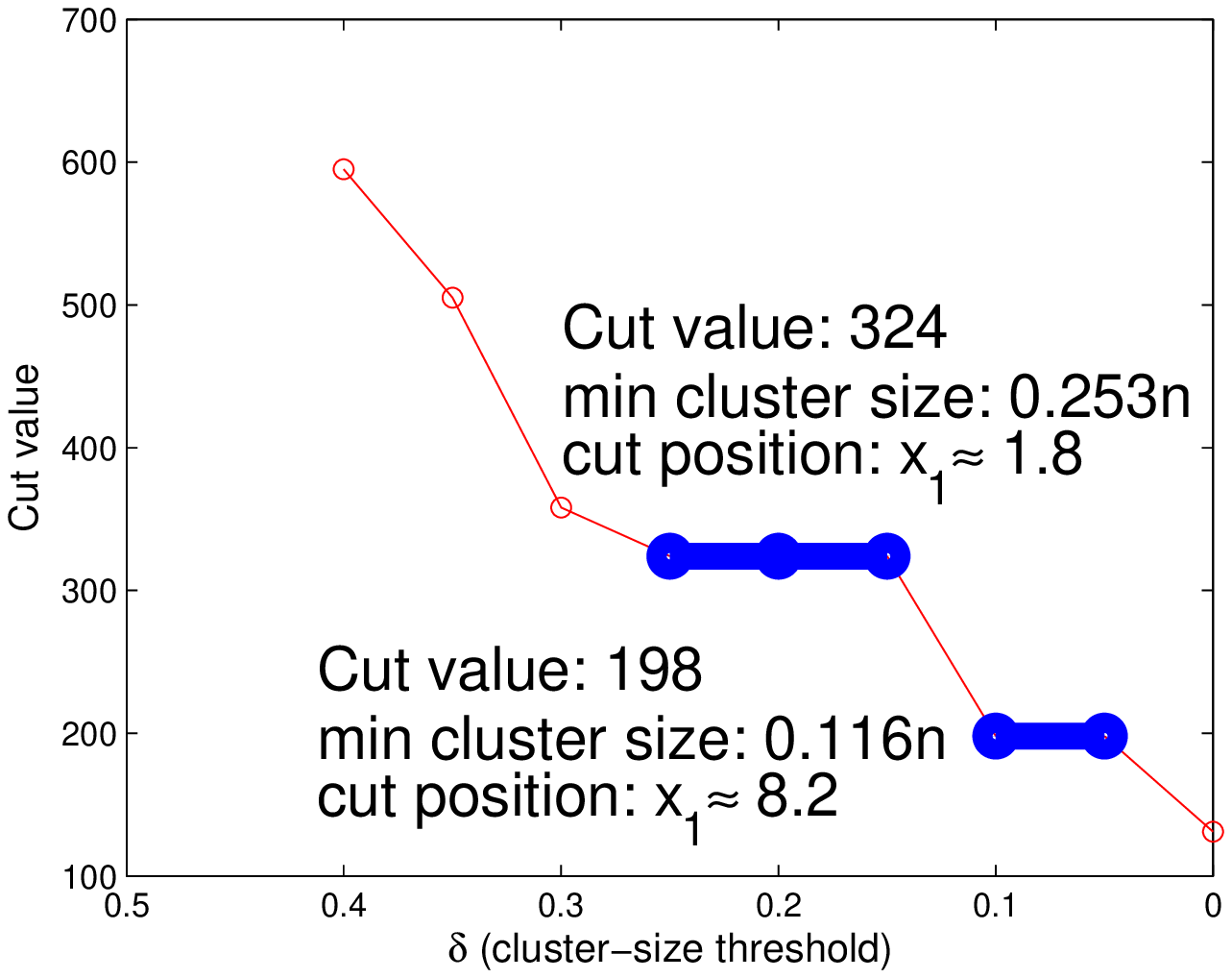}}
\caption{\small 2-clustering results of 1 large and 2 small proximal gaussian mixture components. Both valleys are at unbalanced positions. The rightmost cluster is smaller than the left cluster with a deeper valley. $n=1100$, binary weight is adopted. Cut values in (a) are averaged over 20 Monte Carlo runs. Results in (b) are from one run. By varying cluster-size allowance threshold $\delta$, our method is able to detect different small clusters and generate meaningful cuts.}
\label{fig:multiple_cuts}
\vspace*{-0.2in}
\end{figure}

\subsection{Comments on RMD Method}\label{subsec:discRMD}
{\bf Tuning Parameters:}
We first describe parameters involved in our RMD method. We have already pointed out that $\lambda$ is a parameter that is optimized and so does not count as a tuning parameter. So we are left with parameters $l$ and $\delta$. As we pointed out in Sec.~3 the choice of $\delta$ is based on our prior or desire to find clusters that are sizable, say 5\% to 10\% of the data. This leaves the choice to a single tuning parameter, namely, $l$. Our method appears to be relatively insensitive to choice of $l$. Note that unlike $k$ and $\sigma$, which are used for graph construction, the parameter $l$ here is primarily used to relatively order data points based on whether they belong to high-density or low-density regions. In most situations we have encountered this ranking does not substantially change, namely, it is rarely the case where an empirically low ranked data point should have a high-rank (i.e. high-density region). Similar results have also been observed in the context of high-dimensional anomaly detection~\cite{zhaonips,zhaoaistats12}.


{\bf Time Complexity:}
The time complexity of U-statistic rank computation is $O(Bdn^2logn)$, and RMD graph construction is $O(dn^2logn)$, which leads to an aggregate complexity of $O\left((B+1)dn^2logn\right)$. In experiments we set $B=5$, so the complexity is on the same order of constructing a $k$-NN graph($O(dn^2logn)$).


\section{Conclusions}\label{sec:conclusion}
We have demonstrated that spectral clustering and graph based semi-supervised learning algorithms can fail on conventional graph methods for unbalanced and proximal data clusters. We propose a systematic procedure for graph construction (RMD graph), based on adaptive sparsification and densification of neighborhoods of $k$-NN graphs. Our method effectively incorporates density, maintains robustness to outliers, and adapts to different degrees of unbalancedness. We present a optimization framework for graph-based approaches, which allows for best sizable clusters separated by the smallest cut value. By constraining the smallest cluster sizes we can detect multiple small clusters and generate different meaningful cuts. Our simulations demonstrate significant performance improvements over existing methods for synthetic and real datasets. The ability to detect small-size clusters (Fig.\ref{fig:multiple_cuts}) indicates that our idea may be utilized in other applications such as community detection in large real networks, where graph-based approaches are popular but small-size community detection is difficult \cite{Shah10}. 

\section*{Appendix: Proofs of Theorems}
For ease of development, let $n=m_1(m_2+1)$, and divide $n$ data points into: $D=D_0 \bigcup  D_1 \bigcup ... \bigcup D_{m_1}$, where $D_0=\{x_1,...,x_{m_1}\}$, and each $D_j, j=1,...,m_1$ involves $m_2$ points. $D_j$ is used to generate the statistic $G$ for $u$ and $x_j\in D_0$, for $j=1,...,m_1$. $D_0$ is used to compute the rank of $u$:
\begin{equation}
    R(u) = \frac{1}{m_1}\sum_{j=1}^{m_1} \mathbb{I}_{\{ G(x_j;D_j)>G(u;D_j) \}}
\end{equation}
We provide the proof for the statistic $G(u)$ of the following form:
\begin{eqnarray}
  G(u;D_j) &=& \frac{1}{l}\sum^{l+\lfloor \frac{l}{2} \rfloor}_{i=l-\lfloor \frac{l-1}{2} \rfloor}\left( \frac{l}{i} \right)^{\frac{1}{d}}D_{(i)}(u).
\end{eqnarray}
where $D_{(i)}(u)$ denotes the distance from $u$ to its $i$-th nearest neighbor among $m_2$ points in $D_j$. Practically we can omit the weight as Eq.(\ref{eq:grank}) in the paper. The proof for the first and second statistics can be found in \cite{zhaonips}.

\textbf{Proof of Theorem 1:}
\begin{proof}
The proof involves two steps:
\begin{itemize}
  \item[1.] The expectation of the empirical rank $\mathbb{E}\left[R(u)\right]$ is shown to converge to $p(u)$ as $n\rightarrow\infty$.
  \item[2.] The empirical rank $R(u)$ is shown to concentrate at its expectation as $n\rightarrow\infty$.
\end{itemize}
The first step is shown through Lemma \ref{lem:expectation}. For the second step, notice that the rank $R(u) = \frac{1}{m_1}\sum_{j=1}^{m_1} Y_j$, where $Y_j = \mathbb{I}_{\{ G(x_j;D_j)>G(u;D_j) \}}$ is independent across different $j$'s, and $Y_j \in [0,1]$. By Hoeffding's inequality, we have:
\begin{equation}
    \mathbb{P}\left( | R(u) - \mathbb{E}\left[R(u)\right] | > \epsilon \right) < 2\exp\left( -2m_1\epsilon^2 \right)
\end{equation}
Combining these two steps finishes the proof.
\end{proof}

\textbf{Proof of Theorem 2:}
\begin{proof}
We only present a brief outline of the proof. We want to establish the convergence result of the cut term and the balancing terms respectively, that is:
\begin{eqnarray}
    &\frac{1}{nk_n}\sqrt[d]{\frac{n}{k_n}}cut_n(S)
    \rightarrow C_d\int_S{f^{1-\frac{1}{d}}(s)\rho(s)^{1+\frac{1}{d}}ds}. \label{eq:term1}\\
    &nk_n\frac{1}{vol(V^\pm)}\rightarrow
    \frac{1}{\mu(C^\pm)}. \label{eq:term2}
\end{eqnarray}
where $V^+(V^-)=\{x\in{V}: x\in{C^+}(C^-)\}$ are the discrete version of $C^+(C^-)$.

Eq.(\ref{eq:term1}) is established in two steps. First we can show that the LHS cut term converges to its expectation $\mathbb{E}\left(\frac{1}{nk_n}\sqrt[d]{\frac{n}{k_n}}cut_n(S)\right)$ by making use of the McDiarmid's inequality. Second we show that this expectation term actually converges to the RHS of
Eq.(\ref{eq:term1}). This is the most intricate part and we state it as a separate result in Lemma~\ref{expectation}.


For Eq.(\ref{eq:term2}), recall that the volume term of $V^+$ is $vol(V^+)=\sum_{u\in{V^+},v\in{V}}1$. It can be shown that as $n\rightarrow\infty$, the distance between any connected pair $(u,v)$ goes to zero. Next we note that  the number of points in $V^+$ is binomially distributed $Binom(n,\mu(C^+))$. Using the Chernoff bound of binomial sum we can show that almost surely Equation \ref{eq:term2} holds true.
\end{proof}

\begin{lemma}\label{expectation}
Given the assumptions of Theorem 2,
\begin{equation}
    \mathbb{E}\left(\frac{1}{nk_n}\sqrt[d]{\frac{n}{k_n}}cut_n(S)\right)\longrightarrow C_d\int_S{f^{1-\frac{1}{d}}(s)\rho(s)^{1+\frac{1}{d}}ds}.
\end{equation}
where $C_d=\frac{2\eta_{d-1}}{(d+1)\eta_d^{1+1/d}}$.
\end{lemma}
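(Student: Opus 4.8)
\textbf{Proof plan for Lemma~\ref{expectation}.}
The plan is to follow the limit-cut program of \cite{Maier1,Maier2}, the one new ingredient being that the size of each neighborhood is the \emph{data-dependent} quantity $k_n\rho(u)$ rather than the constant $k_n$. First I would rewrite the expected cut as an integral over pairs of endpoints. Connecting each $x$ to its $\deg(x)$ closest neighbors produces a directed adjacency $\tilde A$, and the symmetric weight matrix entering the Laplacian is $W=\tilde A+\tilde A^{\top}$; hence $cut_n(S)=\sum_{u\in V^+,\,v\in V^-}(\tilde A_{uv}+\tilde A_{vu})$, which by exchangeability decomposes in expectation into two equal directed contributions. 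Conditioning on the two endpoint positions, one directional contribution is
\[
\mathbb{E}\big[cut_n^{\to}(S)\big]=n(n-1)\int_{C^+}\!\!\int_{C^-} f(u)\,f(v)\,\pi_n(u,v)\,dv\,du,
\]
where $\pi_n(u,v)$ is the probability that $v$ is among the $\deg(u)$ nearest neighbors of $u$; the whole problem reduces to evaluating this double integral and doubling it.

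The key step, and where Theorem~\ref{rank-pvalue} enters, is the limiting form of $\pi_n(u,v)$. By Theorem~\ref{rank-pvalue} the rank obeys $R(u)\to p(u)$, so $\deg(u)=k_n\big(\lambda+2(1-\lambda)R(u)\big)=k_n\rho(u)(1+o(1))$. The event that $v$ is a neighbor of $u$ holds iff fewer than $\deg(u)$ of the remaining points fall in $B(u,\|u-v\|)$; since that count is $\mathrm{Binom}\big(n-2,\,\mu(B(u,\|u-v\|))\big)$, a Chernoff bound shows it concentrates, so $\pi_n(u,v)$ converges to the deterministic indicator $\mathbb{1}\{\|u-v\|\le r(u)\}$ with connection radius $r(u)=\big(k_n\rho(u)/(n\eta_d f(u))\big)^{1/d}$ fixed by $n\eta_d f(u)\,r(u)^d=k_n\rho(u)$. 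Because $r(u)=O\big((k_n/n)^{1/d}\big)\to 0$, only pairs lying in a shrinking tube around $S$ contribute to the integral.

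Next I would evaluate this localized integral geometrically. Parametrizing $u=s+t\nu(s)$ for $s\in S$ along the normal $\nu(s)$, the smoothness assumptions of \cite{Maier1} let me freeze $f(u),\rho(u)$ at the surface values $f(s),\rho(s)$ and replace $S$ by its tangent plane, the curvature contributing only lower order. The inner integral then becomes $n f(s)$ times the volume of the spherical cap of $B(u,r(s))$ on the far side of the plane, and integrating that cap over $t\in[0,r(s)]$ gives the elementary identity $\int_{0}^{r}\big(\int_{t}^{r}\eta_{d-1}(r^2-\tau^2)^{(d-1)/2}\,d\tau\big)\,dt=\eta_{d-1}\,r^{d+1}/(d+1)$. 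Substituting $r(s)^{d+1}=\big(k_n\rho(s)/(n\eta_d f(s))\big)^{1+1/d}$ yields, for one direction, $\frac{\eta_{d-1}}{(d+1)\eta_d^{1+1/d}}\,n^{1-1/d}k_n^{1+1/d}\int_S f^{1-1/d}(s)\rho(s)^{1+1/d}\,ds$. Doubling for the symmetrization and multiplying by the normalization $\frac{1}{nk_n}\sqrt[d]{n/k_n}=n^{1/d-1}k_n^{-1-1/d}$ cancels all powers of $n$ and $k_n$ and produces exactly $C_d\int_S f^{1-1/d}\rho^{1+1/d}\,ds$ with $C_d=2\eta_{d-1}/\big((d+1)\eta_d^{1+1/d}\big)$.

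The hard part will not be the geometry but making these two approximations uniform and integrable. The connection event is random with a threshold $r(u)$ that itself depends on the random rank $R(u)$, so I must control the joint fluctuation of $R(u)$ and of the $\deg(u)$-th nearest-neighbor distance and show the residual error is negligible even after multiplication by the large factor $n^{1-1/d}k_n^{1+1/d}$; this is precisely what forces the growth conditions $k_n/\log n\to\infty$ (and $k_n/\sqrt n\to\infty$ when $d=1$) together with $k_n/n\to 0$. The second delicate point is a dominated-convergence/uniform-integrability argument justifying passage of the limit through the expectation near $\partial C$ and where $f$ is small, so that freezing $f,\rho$ and flattening $S$ cost only $o(1)$ relative error after integration against the surface measure.
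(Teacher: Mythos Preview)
Your plan is correct and follows essentially the same route as the paper: both proofs instantiate the limit--cut program of \cite{Maier1,Maier2}, reduce the expected cut to an integral over a shrinking tube around $S$, invoke concentration of the nearest--neighbor radius at $r(u)=\big(k_n\rho(u)/(n\eta_d f(u))\big)^{1/d}$, and finish with the spherical--cap identity $\int_0^r \mathrm{vol}\big(B(s+t\nu,r)\cap C^-\big)\,dt=\eta_{d-1}r^{d+1}/(d+1)$. The only cosmetic differences are that the paper conditions on a single endpoint and integrates against the CDF $F_{R_x^k}$ of the $k_n\rho(x)$-th nearest--neighbor distance, whereas you condition on both endpoints and work with the edge probability $\pi_n(u,v)$; and the paper sidesteps the randomness of $R(u)$ by assuming the idealized degree $k_n\rho(x)$ from the outset, whereas you (more carefully) pass through Theorem~\ref{rank-pvalue}.

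One small correction: the two directed contributions are not equal ``by exchangeability'' at finite $n$---the law of $\tilde A_{uv}$ with $u\in C^+,\,v\in C^-$ is not the same as with the roles of $C^+,C^-$ swapped, because the neighborhood radius depends on $\rho(u)$ and hence on which side $u$ lies. They agree only at leading order, after localizing to the tube and using continuity of $f$ and $\rho$ across $S$. The cleanest fix is to do what the paper does: sum outgoing edges over \emph{all} $u$ (both sides), which produces the factor $2$ via the $t$-integral over $(-\infty,\infty)$ rather than via a symmetry claim.
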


\begin{proof}
The proof is similar to \cite{Maier2} and we provide an outline here. The first trick is to define a cut function for a fixed point $x_i\in V^+$, whose expectation is easier to compute:
\begin{eqnarray}
cut_{x_i} = \sum_{v\in V^{-},(x_i,v)\in E}w(x_i,v).
\end{eqnarray}
Similarly, we can define $cut_{x_i}$ for $x_i\in V^-$. The expectation of $cut_{x_i}$ and  $cut_n(S)$ can be related:
\begin{eqnarray}\label{eq:expect}
\mathbb{E}(cut_n(S))=n\mathbb{E}_x(\mathbb{E}(cut_{x}))
\end{eqnarray}
Then the value of $\mathbb{E}(cut_{x_i})$ can be computed as,
\begin{equation}
    (n-1)\int_0^{\infty}{\left[\int_{B(x_i,r)\cap{C^-}}f(y)dy\right]dF_{R_{x_i}^k}(r)}.
\end{equation}
where $r$ is the distance of $x_i$ to its $k_n\rho(x_i)$-th nearest neighbor. The value of $r$ is a random variable and can be characterized by the CDF $F_{R_{x_i}^k}(r)$.
Combining equation \ref{eq:expect} we can write down the whole expected cut value
\begin{eqnarray}
  \mathbb{E}(cut_n(S)) =n\mathbb{E}_x(\mathbb{E}(cut_{x}))= n\int_{\mathbb{R}^d}f(x)\mathbb{E}(cut_{x})dx \\
   = n(n-1)\int_{\mathbb{R}^d}f(x)\left[\int_0^{\infty}{g(x,r)dF_{R_x^k}(r)}\right]dx.
\end{eqnarray}

To simplify the expression, we use $g(x,r)$ to denote
\begin{equation}
    g(x,r)=\begin{cases}
               \int_{B(x,r)\cap{C^-}}f(y)dy,  x\in{C^+} \\
               \int_{B(x,r)\cap{C^+}}f(y)dy,  x\in{C^-}.
             \end{cases}
\end{equation}

Under general assumptions, when $n$ tends to infinity, the random variable $r$ will highly concentrate around its mean $\mathbb{E}(r_x^k)$.
Furthermore, as $k_n/n\rightarrow{0}$, $\mathbb{E}(r_x^k)$ tends to zero and the speed of convergence
\begin{eqnarray}\label{eq:EkNN}
\mathbb{E}(r_x^k)\approx(k\rho(x)/((n-1)f(x)\eta_d))^{1/d}
\end{eqnarray}
So the inner integral in the cut value can be approximated by $g(x,\mathbb{E}(r_x^k))$, which implies,
\begin{equation}
    \mathbb{E}(cut_n(S))\approx{n}(n-1)\int_{\mathbb{R}^d}f(x)g(x,\mathbb{E}(r_x^k))dx.
\end{equation}

The next trick is to decompose the integral over $\mathbb{R}^d$ into two orthogonal directions, i.e., the direction along the hyperplane $S$ and its normal direction (We use $\overrightarrow{n}$ to denote the unit normal vector):
\begin{equation}
    \int_{\mathbb{R}^d}f(x)g(x,\mathbb{E}(r_x^k))dx= \\
    \int_{S}\int_{-\infty}^{+\infty}f(s+t\overrightarrow{n})g(s+t\overrightarrow{n},\mathbb{E}(r_{s+t\overrightarrow{n}}^k))dtds.
\end{equation}
When $t>\mathbb{E}(r_{s+t\overrightarrow{n}}^k)$, the integral region of $g$ will be empty: $B(x,\mathbb{E}(r_x^k))\cap{C^-}=\emptyset$. On the other hand, when $x=s+t\overrightarrow{n}$ is close to $s\in{S}$, we have the approximation $f(x)\approx{f(s)}$:
\begin{eqnarray}
  &\int_{-\infty}^{+\infty}f(s+t\overrightarrow{n})g(s+t\overrightarrow{n},\mathbb{E}(r_{s+t\overrightarrow{n}}^k))dt \\
  &\approx 2\int_{0}^{\mathbb{E}(r_{s}^k)}f(s)\left[f(s)vol\left(B(s+t\overrightarrow{n},\mathbb{E}{r_s^k})\cap{C^-}\right)\right]dt  \\
  &= 2f^2(s)\int_{0}^{\mathbb{E}(r_{s}^k)}vol\left(B(s+t\overrightarrow{n},\mathbb{E}(r_s^k))\cap{C^-}\right)dt.
\end{eqnarray}

The term $vol\left(B(s+t\overrightarrow{n},\mathbb{E}(r_s^k))\cap{C^-}\right)$ is the volume of $d$-dim spherical cap of radius $\mathbb{E}(r_s^k))$, which is at distance $t$ to the center. Through direct computation we obtain:
\begin{equation}
    \int_{0}^{\mathbb{E}(r_{s}^k)}vol\left(B(s+t\overrightarrow{n},\mathbb{E}(r_s^k))\cap{C^-}\right)dt=\mathbb{E}(r_s^k)^{d+1}\frac{\eta_{d-1}}{d+1}.
\end{equation}
Combining the above step and plugging in the approximation of $\mathbb{E}(r_s^k)$ in Eq.(\ref{eq:EkNN}), we finish the proof.
\end{proof}

\begin{lemma}\label{lem:expectation}
By choosing $l$ properly, as $m_2\rightarrow\infty$, it follows that,
$$ | \mathbb{E}\left[R(u)\right] - p(u)| \longrightarrow 0$$
\end{lemma}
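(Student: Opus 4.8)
Since the summands defining $R(u)$ are identically distributed, the expectation collapses to a single probability, $\mathbb{E}[R(u)] = \mathbb{P}\big(G(X;D) > G(u;D)\big)$, where $X\sim f$ is drawn independently of the reference set $D$ of $m_2$ points used to form the statistic. The plan is to show that, as $m_2\to\infty$, the comparison $G(X;D)>G(u;D)$ becomes equivalent to the density comparison $f(X)<f(u)$, after which the claim follows by integrating the resulting indicator against $f$. Concretely, I would first establish that $G(u;D)$ concentrates around a deterministic, strictly monotone transform of $f(u)$, and then pass to the limit inside the expectation.

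\textbf{Step 1 (concentration of $G$).} The key observation is that the expected $i$-th nearest-neighbor distance among $m_2$ points obeys $\mathbb{E}[D_{(i)}(u)]\approx \big(i/((m_2-1)f(u)\eta_d)\big)^{1/d}$, so the weight $(l/i)^{1/d}$ is chosen precisely so that each weighted term satisfies $(l/i)^{1/d}D_{(i)}(u)\approx \big(l/((m_2-1)f(u)\eta_d)\big)^{1/d}$, a value that no longer depends on $i$. Averaging the $l$ aligned terms then both removes the $i$-dependence and cuts the variance, so that $G(u;D)\to c_{m_2}\,f(u)^{-1/d}$ in probability, with a common normalizing factor $c_{m_2}=\big(l/((m_2-1)\eta_d)\big)^{1/d}$ shared by every point. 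This is exactly where the ``proper choice of $l$'' enters: one needs $l\to\infty$ (so the average concentrates) together with $l/m_2\to 0$ (so the neighborhood stays local and $f$ is essentially constant on it, validating the first-order approximation). The concentration bounds needed here, under the regularity assumptions on $f$, are those already established in \cite{zhaonips}.

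\textbf{Step 2 (from $G$-comparison to density-comparison, then integrate).} Because $c_{m_2}$ is common to $G(X;D)$ and $G(u;D)$ it cancels in the comparison, and since $t\mapsto t^{-1/d}$ is strictly decreasing, for any fixed $x$ with $f(x)\neq f(u)$ the concentration of Step 1 gives $\mathbb{P}_D\big(G(x;D)>G(u;D)\big)\to \mathbb{I}_{\{f(x)<f(u)\}}$. Conditioning on $X=x$ and invoking the regularity hypothesis that the boundary level set $\{x:f(x)=f(u)\}$ has $f$-measure zero, the conditional probability converges $f$-almost everywhere; as it is bounded by $1$, dominated convergence yields
\[
\mathbb{E}[R(u)] = \int f(x)\,\mathbb{P}_D\big(G(x;D)>G(u;D)\big)\,dx \longrightarrow \int_{\{f(x)<f(u)\}} f(x)\,dx = p(u),
\]
which is the assertion.

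\textbf{Main obstacle.} The hard part will be Step 1: turning the heuristic $\mathbb{E}[D_{(i)}(u)]\approx(\cdots)^{1/d}$ into a rigorous statement with control of \emph{both} bias and variance strong enough to conclude convergence of $G$, and in particular verifying that the weighting genuinely cancels the $i$-dependent bias so that $G$ localizes around a monotone function of $f$ rather than a smeared mixture. The delicate balance is the joint scaling $l\to\infty$ with $l/m_2\to 0$: too large an $l$ destroys the local constant-density approximation (bias blows up), while too small an $l$ leaves the weighted average noisy (variance blows up). A secondary technical point is the near-boundary region $\{x:f(x)\approx f(u)\}$, where the gap between the two limiting $G$-values is small and concentration alone does not fix the indicator; I would absorb this into the measure-zero boundary assumption so that its contribution vanishes after integration.
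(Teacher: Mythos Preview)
Your plan is correct and mirrors the paper's argument: reduce $\mathbb{E}[R(u)]$ to a single probability via exchangeability, show that the $G$-comparison asymptotically coincides with the density comparison $f(x)<f(u)$, and integrate. The paper executes this slightly differently in two places. First, rather than concentrating $G(x;D)$ and $G(u;D)$ separately, it applies McDiarmid's inequality directly to the difference $F_x=G(x;D)-G(u;D)$; the bounded-difference constant is $O(1/l)$ because perturbing a single reference point changes at most one ordered distance in each average, and working with the difference handles the shared reference set $D$ in one stroke. Second, instead of your dominated-convergence passage, the paper splits the $x$-integral quantitatively into $\mathbb{X}_1=\{|f(x)-f(u)|\gtrsim (l/m_2)^{1/d}\}$ (where the McDiarmid exponential is uniformly small and the indicator agrees with the density indicator, via a separate bias lemma controlling $|\mathbb{E}G(x)-A(x)|$ with $A(x)\propto f(x)^{-1/d}$) and the thin shell $\mathbb{X}_2$ (whose $f$-measure is controlled by a regularity assumption of the form $\mathbb{P}(|f(X)-f(u)|<\epsilon)\le M\epsilon$). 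Your DCT route is cleaner for the bare convergence statement; the paper's split yields explicit rates.

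One point to tighten: the conditions you state, $l\to\infty$ and $l/m_2\to 0$, are necessary but not by themselves sufficient. The McDiarmid bound reads $\exp(-(\mathbb{E}F_x)^2 l^2/(Cm_2))$ with $|\mathbb{E}F_x|$ of order $(l/m_2)^{1/d}$, so you need $l^{2+2/d}/m_2^{1+2/d}\to\infty$, i.e.\ $l=m_2^{\alpha}$ with $\alpha$ strictly above $(d+2)/(2d+2)$; the paper in fact takes $\alpha\in\big((d+4)/(2d+4),1\big)$ once the bias control is folded in. Your ``Main obstacle'' paragraph correctly anticipates that the scaling window is the crux, but the explicit range should replace the looser pair of conditions.
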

\begin{proof}
Take expectation with respect to $D$:
\begin{eqnarray}
\mathbb{E}_D\left[R(u)\right]
&=&\mathbb{E}_{D\backslash D_0}\left[\mathbb{E}_{D_0}\left[\frac{1}{m_1}\sum_{j=1}^{m_1}
 \mathbb{I}_{\{G(u;D_j)<G(x_j;D_j)\}}\right]\right]\\
&=&\frac{1}{m_1}\sum_{j=1}^{m_1}\mathbb{E}_{x_j}\left[
\mathbb{E}_{D_j}\left[
\mathbb{I}_{\{G(u;D_j)<G(x_j;D_j)\}}\right]\right]\\
&=&\mathbb{E}_x\left[\mathcal{P}_{D_1}\left(G(u;D_1)<G(x;D_1)\right)\right]
\end{eqnarray}
The last equality holds due to the i.i.d symmetry of $\{x_1,...,x_{m_1}\}$ and $D_1,...,D_{m_1}$. We fix both $u$ and $x$ and temporarily discarding $\mathbb{E}_{D_1}$. Let $F_x(y_1,...,y_{m_2})=G(x)-G(u)$, where $y_1,...,y_{m_2}$ are the $m_2$ points in $D_1$. It follows:
\begin{equation}
    \mathcal{P}_{D_1}\left(G(u)<G(x)\right)
    =\mathcal{P}_{D_1}\left(F_x(y_1,...,y_{m_2})>0\right)
    =\mathcal{P}_{D_1}\left(F_x-\mathbb{E}F_x>-\mathbb{E}F_x\right).
\end{equation}

To check McDiarmid's requirements, we replace $y_j$ with $y_j'$. It is easily verified that $\forall j=1,...,m_2$,
\begin{equation}\label{equ:mcdiarmid_condition}
    |F_x(y_1,...,y_{m_2})-F_x(y_1,...,y_j',...,y_{m_2})| \leq 2^{\frac{1}{d}}\frac{2C}{l} \leq \frac{4C}{l}
\end{equation}
where $C$ is the diameter of support. Notice despite the fact that $y_1,...,y_{m_2}$ are random vectors we can still apply MeDiarmid's inequality, because according to the form of $G$, $F_x(y_1,...,y_{m_2})$ is a function of $m_2$ i.i.d random variables $r_1,...,r_{m_2}$ where $r_i$ is the distance from $x$ to $y_i$.
Therefore if $\mathbb{E}F_x<0$, or $\mathbb{E}G(x)<\mathbb{E}G(u)$, we have by McDiarmid's inequality,
\begin{equation}
    \mathcal{P}_{D_1}\left(G(u)<G(x)\right)
    = \mathcal{P}_{D_1}\left( F_x > 0 \right)
    = \mathcal{P}_{D_1}\left( F_x-\mathbb{E}F_x>-\mathbb{E}F_x \right)
    \leq \exp\left(-\frac{(\mathbb{E}F_x)^2 l^2}{8C^2m_2}\right)
\end{equation}
Rewrite the above inequality as:
\begin{equation}\label{equ:bound_no_expectation}
    \mathbb{I}_{\{\mathbb{E}F_x>0\}}-e^{-\frac{(\mathbb{E}F_x)^2 l^2}{8C^2m_2}}
    \leq \mathcal{P}_{D_1}\left( F_x > 0 \right)
    \leq \mathbb{I}_{\{\mathbb{E}F_x>0\}}+e^{-\frac{(\mathbb{E}F_x)^2 l^2}{8C^2m_2}}
\end{equation}
It can be shown that the same inequality holds for $\mathbb{E}F_x>0$, or $\mathbb{E}G(x)>\mathbb{E}G(u)$. Now we take expectation with respect to $x$:
\begin{equation}\label{equ:bound_with_expectation}
    \mathcal{P}_x\left(\mathbb{E}F_x>0\right)-\mathbb{E}_x\left[e^{-\frac{(\mathbb{E}F_x)^2 l^2}{8C^2m_2}}\right] \leq
    \mathbb{E}\left[\mathcal{P}_{D_1}\left( F_x > 0 \right)\right] \leq \mathcal{P}_x\left(\mathbb{E}F_x>0\right)+\mathbb{E}_x\left[e^{-\frac{(\mathbb{E}F_x)^2 l^2}{8C^2m_2}}\right]
\end{equation}
Divide the support of $x$ into two parts, $\mathbb{X}_1$ and $\mathbb{X}_2$, where $\mathbb{X}_1$ contains those $x$ whose density $f(x)$ is relatively far away from $f(u)$, and $\mathbb{X}_2$ contains those $x$ whose density is close to $f(u)$. We show for $x \in \mathbb{X}_1$, the above exponential term converges to 0 and $\mathcal{P}\left(\mathbb{E}F_x>0\right) = \mathcal{P}_x\left( f(u)>f(x) \right)$, while the rest $x\in\mathbb{X}_2$ has very small measure. Let $A(x)=\left(\frac{k}{f(x) c_d m_2}\right)^{1/d}$. By Lemma \ref{lem:bound_expectation} we have:
\begin{equation}
    | \mathbb{E}G(x) - A(x) | \leq \gamma \left(\frac{l}{m_2}\right)^{\frac{1}{d}} A(x)
    \leq \gamma \left(\frac{l}{m_2}\right)^{\frac{1}{d}} \left(\frac{l}{f_{min}c_d m_2}\right)^{\frac{1}{d}}
    =    \left(\frac{\gamma_1}{c_d^{1/d}}\right) \left(\frac{l}{m_2}\right)^{\frac{2}{d}}
\end{equation}
where $\gamma$ denotes the big $O(\cdot)$, and $\gamma_1 = \gamma \left(\frac{1}{f_{min}}\right)^{1/d}$. Applying uniform bound we have:
\begin{equation}
    A(x)-A(u)- 2\left(\frac{\gamma_1}{c_d^{1/d}}\right) \left(\frac{l}{m_2}\right)^{\frac{2}{d}}
    \leq \mathbb{E}\left[G(x) - G(u)\right]
    \leq A(x)-A(u)+ 2\left(\frac{\gamma_1}{c_d^{1/d}}\right) \left(\frac{l}{m_2}\right)^{\frac{2}{d}}
\end{equation}
Now let $\mathbb{X}_1=\{ x:|f(x)-f(u)|\geq 3\gamma_1 d f_{min}^{\frac{d+1}{d}} \left(\frac{l}{m_2}\right)^{\frac{1}{d}} \}$. For $x\in \mathbb{X}_1$, it can be verified that $|A(x)-A(u)|\geq 3\left(\frac{\gamma_1}{c_d^{1/d}}\right) \left(\frac{l}{m_2}\right)^{\frac{2}{d}}$, or $|\mathbb{E}\left[G(x) - G(u)\right]| > \left(\frac{\gamma_1}{c_d^{1/d}}\right) \left(\frac{l}{m_2}\right)^{\frac{2}{d}}$, and $\mathbb{I}_{\{f(u)>f(x)\}}=\mathbb{I}_{\{\mathbb{E}G(x)>\mathbb{E}G(u)\}}$. For the exponential term in Equ.(\ref{equ:bound_no_expectation}) we have:
\begin{equation}
    \exp\left(-\frac{(\mathbb{E}F_x)^2 l^2}{2C^2m_2}\right)
    \leq \exp\left(-\frac{ \gamma_1^2 l^{2+\frac{4}{d}} }{ 8C^2 c_d^{\frac{2}{d}} m_2^{1+\frac{4}{d}} } \right)
\end{equation}
For $x\in \mathbb{X}_2=\{x:|f(x)-f(u)|< 3\gamma_1 d \left(\frac{l}{m_2}\right)^{\frac{1}{d}}f_{min}^{\frac{d+1}{d}} \}$, by the regularity assumption, we have $\mathcal{P}(\mathbb{X}_2)<3M\gamma_1 d \left(\frac{l}{m_2}\right)^{\frac{1}{d}}f_{min}^{\frac{d+1}{d}}$. Combining the two cases into Equ.(\ref{equ:bound_with_expectation}) we have for upper bound:
\begin{eqnarray}
  \mathbb{E}_D\left[R(u)\right]
  &=& \mathbb{E}_x\left[\mathcal{P}_{D_1}\left(G(u)<G(x)\right)\right] \\
  &=& \int_{\mathbb{X}_1}\mathcal{P}_{D_1}\left(G(u)<G(x)\right)f(x)dx +  \int_{\mathbb{X}_2}\mathcal{P}_{D_1}\left(G(u)<G(x)\right)f(x)dx \\
  &\leq& \left( \mathcal{P}_x\left(f(u)>f(x)\right) + \exp\left(-\frac{ \gamma_1^2 l^{2+\frac{4}{d}} }{ 8C^2 c_d^{\frac{1}{d}} m_2^{1+\frac{4}{d}} } \right) \right)\mathcal{P}(x\in \mathbb{X}_1) + \mathcal{P}(x\in \mathbb{X}_2) \\
  &\leq&  \mathcal{P}_x\left(f(u)>f(x)\right) + \exp\left(-\frac{ \gamma_1^2 l^{2+\frac{4}{d}} }{ 8C^2 c_d^{\frac{1}{d}} m_2^{1+\frac{4}{d}} } \right) + 3M\gamma_1 d f_{min}^{\frac{d+1}{d}} \left(\frac{l}{m_2}\right)^{\frac{1}{d}}
\end{eqnarray}
Let $l=m_2^\alpha$ such that $\frac{d+4}{2d+4}<\alpha<1$, and the latter two terms will converge to 0 as $m_2 \rightarrow \infty$. Similar lines hold for the lower bound. The proof is finished.
\end{proof}

\begin{lemma}\label{lem:bound_expectation}
Let $A(x)=\left(\frac{l}{m c_d f(x)}\right)^{1/d}$, $\lambda_1 = \frac{\lambda}{f_{min}}\left(\frac{1.5}{c_d f_{min}}\right)^{1/d}$. By choosing $l$ appropriately, the expectation of $l$-NN distance $\mathbb{E}D_{(l)}(x)$ among $m$ points satisfies:
\begin{equation}
    | \mathbb{E}D_{(l)}(x) - A(x) | = O\left( A(x) \lambda_1 \left(\frac{l}{m}\right)^{1/d} \right)
\end{equation}
\end{lemma}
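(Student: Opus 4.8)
The plan is to reduce the computation of $\mathbb{E}D_{(l)}(x)$ to the expectation of a transformed uniform order statistic, and then to invert the local behaviour of the mass function $q(r):=\mu(B(x,r))=\int_{B(x,r)}f(y)\,dy$. The key observation is a \emph{probability integral transform}: $q$ is precisely the CDF of the scalar distance $\|X_i-x\|$, so $q(\|X_i-x\|)$ is distributed uniformly on $[0,1]$, and these variables are i.i.d.\ across the $m$ samples. Since $q$ is continuous and strictly increasing on the relevant range, the $l$-th nearest neighbour distance satisfies $q(D_{(l)}(x))=Z_{(l)}$, where $Z_{(l)}$ is the $l$-th order statistic of $m$ i.i.d.\ uniforms, i.e.\ $Z_{(l)}\sim\mathrm{Beta}(l,m-l+1)$. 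Hence $D_{(l)}(x)=q^{-1}(Z_{(l)})$ in distribution, and the whole problem becomes one of estimating $\mathbb{E}\,q^{-1}(Z_{(l)})$.

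Next I would expand $q$ and its inverse locally. Under the regularity conditions on $f$ (density bounded below by $f_{min}$ on a compact support and Lipschitz with constant $\lambda$), a first-order expansion of the mass of a small ball gives $q(r)=c_d f(x)\,r^{d}(1+O(r))$, where $c_d=\eta_d$ is the volume of the unit ball and the relative $O(r)$ error is controlled by $\lambda/f(x)$. Inverting, for small $t$,
\begin{equation}
q^{-1}(t)=\left(\frac{t}{c_d f(x)}\right)^{1/d}\left(1+O(t^{1/d})\right),
\end{equation}
with the relative error constant absorbing $\lambda$, $f_{min}$ and the numerical factor $1.5$ that appears in $\lambda_1$ (this factor arises when bounding $f(x)/(f(x)-\lambda r)$ during the inversion). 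Evaluating at $t=l/m$ reproduces exactly $A(x)=(l/(m c_d f(x)))^{1/d}$, which explains why $A(x)$ is the correct leading order.

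Finally I would take the expectation. Since $\mathbb{E}Z_{(l)}=l/(m+1)$ and $Z_{(l)}$ concentrates around $l/m$ with variance of order $l/m^{2}$, substituting the expansion of $q^{-1}$ yields
\begin{equation}
\mathbb{E}D_{(l)}(x)=\mathbb{E}\,q^{-1}(Z_{(l)})=A(x)\left(1+O((l/m)^{1/d})\right),
\end{equation}
which is exactly the claimed bound $|\mathbb{E}D_{(l)}(x)-A(x)|=O\!\bigl(A(x)\lambda_1(l/m)^{1/d}\bigr)$. To make this rigorous I would split the expectation into a bulk part, where $Z_{(l)}$ lies in a small neighbourhood of $l/m$ and both the local expansion of $q^{-1}$ and a Taylor control of the concave map $t\mapsto t^{1/d}$ apply, and a tail part, controlled by large-deviation (Chernoff/Beta-tail) bounds on $Z_{(l)}$ so that the rare region where the local expansion of $q$ fails contributes negligibly.

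The main obstacle is precisely this last step: the clean local inversion of $q$ is valid only for small arguments, whereas $\mathbb{E}\,q^{-1}(Z_{(l)})$ integrates over the full range of $Z_{(l)}$, including rare events where $B(x,r)$ is large or meets the boundary of the support. Handling this requires a uniform concentration statement for the Beta order statistic $Z_{(l)}$ so that the bulk dominates, together with a verification that the second-order, variance-induced correction of order $1/l$ coming from Jensen's inequality on $t\mapsto t^{1/d}$ is dominated by the leading bias of order $(l/m)^{1/d}$. This dominance is exactly what the hypothesis ``by choosing $l$ appropriately'' secures (concretely $l=m^{\alpha}$ with $\alpha$ bounded away from $0$ and $1$, in the range used in Lemma~\ref{lem:expectation}).
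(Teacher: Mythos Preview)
Your approach is correct but follows a genuinely different route from the paper's proof. The paper never invokes the probability integral transform or the Beta distribution; instead it argues by a direct sandwiching. Concretely, it fixes a deterministic radius $r_1=r(x,(1+\delta_m)l/m)$ (the radius capturing mass $(1+\delta_m)l/m$), observes that $\{D_{(l)}(x)>r_1\}$ is the event that a $\mathrm{Bin}(m,(1+\delta_m)l/m)$ variable falls below $l$, bounds this by a Chernoff inequality, and then writes $\mathbb{E}D_{(l)}(x)\le r_1+C\exp(-\delta_m^2 l/2(1+\delta_m))$ using the diameter $C$ of the support as a crude global bound on the tail contribution. The link between $r_1$ and $A(x)$ is then made by the Lipschitz lower bound $f\ge f(x)-\lambda r_1$ on $B(x,r_1)$, together with the a priori estimate $r_1\le r_{\max}=((1+\delta_m)l/(c_d f_{min} m))^{1/d}$; the constant $1.5$ in $\lambda_1$ actually comes from $(1+\delta_m)\le 1.5$ here, not from inverting $f(x)/(f(x)-\lambda r)$ as you guessed. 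A symmetric argument with $(1-\delta_m)$ gives the lower bound, and the explicit choices $l=m^{(3d+8)/(4d+8)}$, $\delta_m=m^{-1/4}$ balance the two error terms.

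What each approach buys: the paper's argument is more elementary---it needs only a binomial Chernoff bound and a one-line Lipschitz estimate, and it sidesteps any analysis of $\mathbb{E}\,t^{1/d}$ for random $t$ by replacing the random order statistic with a deterministic quantile plus an exponentially small correction. Your route is more structural: recognising $q(D_{(l)}(x))$ as an exact Beta order statistic immediately explains the form of $A(x)$ and cleanly separates the two error sources (the bias from the local inversion of $q$, of order $(l/m)^{1/d}$, and the fluctuation/curvature correction, of order $1/l$). Your identification of the ``choose $l$ appropriately'' clause as the mechanism forcing the $1/l$ term to be subdominant is exactly right and matches the role of the paper's explicit exponent choice. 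Either argument yields the stated bound; yours is arguably more informative about where the constants come from, while the paper's avoids the bulk/tail decomposition you flag as the main obstacle by absorbing the entire tail into a single $C\cdot\exp(\cdot)$ term.
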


\begin{proof}
Denote $r(x,\alpha)=\min\{r:\mathcal{P}\left(B(x,r)\right)\geq \alpha\}$. Let $\delta_m \rightarrow 0$ as $m \rightarrow \infty$, and $0<\delta_{m}<1/2$.
Let $U\sim Bin(m,(1+\delta_m)\frac{l}{m})$ be a binomial random variable, with $\mathbb{E}U = (1+\delta_{m})l$. We have:
\begin{eqnarray}
  \mathcal{P}\left(D_{(l)}(x)>r(x,(1+\delta_m)\frac{l}{m})\right)
  &=& \mathcal{P}\left(U<l\right) \\
  &=& \mathcal{P}\left(U<\left(1-\frac{\delta_m}{1+\delta_m}\right)(1+\delta_m)l\right) \\
  &\leq& \exp\left(-\frac{\delta_m^2 l}{2(1+\delta_m)}\right)
\end{eqnarray}
The last inequality holds from Chernoff's bound. Abbreviate $r_1 = r(x,(1+\delta_m)\frac{l}{m})$, and $\mathbb{E}D_{(l)}(x)$ can be bounded as:
\begin{eqnarray}
  \mathbb{E}D_{(l)}(x)
  &\leq& r_1\left[1-\mathcal{P}\left(D_{(l)}(x)>r_1\right)\right] + C\mathcal{P}\left(D_{(l)}(x)>r_1\right)  \\
  &\leq& r_1 + C \exp\left(-\frac{\delta_m^2 l}{2(1+\delta_m)}\right)
\end{eqnarray}
where $C$ is the diameter of support. Similarly we can show the lower bound:
\begin{equation}
    \mathbb{E}D_{(l)}(x) \geq r(x,(1-\delta_m)\frac{l}{m}) - C \exp\left(-\frac{\delta_m^2 l}{2(1-\delta_m)}\right)
\end{equation}
Consider the upper bound. We relate $r_1$ with $A(x)$. Notice $\mathcal{P}\left(B(x,r_1)\right)=(1+\delta_m)\frac{l}{m} \geq c_d r_1^d f_{min}$, so a fixed but loose upper bound is $r_1 \leq \left(\frac{(1+\delta_m)l}{c_d f_{min} m}\right)^{1/d} = r_{max}$. Assume $l/m$ is sufficiently small so that $r_1$ is sufficiently small. By the smoothness condition, the density within $B(x,r_1)$ is lower-bounded by $f(x)-\lambda r_1$, so we have:
\begin{eqnarray}
  \mathcal{P}\left(B(x,r_1)\right) &=& (1+\delta_m)\frac{l}{m} \\
  &\geq& c_d r_1^d \left( f(x)-\lambda r_1 \right)\\
  &=& c_d r_1^d f(x)\left( 1-\frac{\lambda}{f(x)}r_1 \right) \\
  &\geq& c_d r_1^d f(x)\left( 1-\frac{\lambda}{f_{min}}r_{max} \right)
\end{eqnarray}
That is:
\begin{equation}
    r_1 \leq A(x)\left( \frac{1+\delta_m}{1-\frac{\lambda}{f_{min}}r_{max}} \right)^{1/d}
\end{equation}
Insert the expression of $r_{max}$ and set $\lambda_1 = \frac{\lambda}{f_{min}}\left(\frac{1.5}{c_d f_{min}}\right)^{1/d}$, we have:
\begin{eqnarray}
  \mathbb{E}D_{(l)}(x)-A(x) &\leq& A(x)\left( \left(\frac{1+\delta_m}{1-\lambda_1 \left(\frac{l}{m}\right)^{1/d}}\right)^{1/d} -1 \right) + C \exp\left(-\frac{\delta_m^2 l}{2(1+\delta_m)}\right) \\
  &\leq& A(x)\left( \frac{1+\delta_m}{1-\lambda_1 \left(\frac{l}{m}\right)^{1/d}}-1 \right) + C \exp\left(-\frac{\delta_m^2 l}{2(1+\delta_m)}\right) \\
  &=& A(x)\frac{\delta_m + \lambda_1 \left(\frac{l}{m}\right)^{1/d}}{1-\lambda_1\left(\frac{l}{m}\right)^{1/d}} + C \exp\left(-\frac{\delta_m^2 l}{2(1+\delta_m)}\right) \\
  &=& O\left( A(x) \lambda_1 \left(\frac{l}{m}\right)^{1/d} \right)
\end{eqnarray}
The last equality holds if we choose $l=m^{\frac{3d+8}{4d+8}}$ and $\delta_m=m^{-\frac{1}{4}}$. Similar lines follow for the lower bound. Combine these two parts and the proof is finished.

\end{proof}

\end{document}